\documentclass{article}

\usepackage[T1]{fontenc}
\usepackage[utf8]{inputenc}
\usepackage[british]{babel}
\usepackage{lmodern}
\usepackage{microtype}

\usepackage{textcomp}
\usepackage{textgreek}
\usepackage{caption}
\DeclareUnicodeCharacter{2032}{\textprime}

\usepackage[a4paper,margin=1in]{geometry}

\usepackage{authblk}


\usepackage{newunicodechar}
\usepackage{amsmath, amssymb, amsthm}
\usepackage{tikz-cd}
\usepackage{geometry}
\geometry{a4paper, margin=1in}

\usepackage{mathtools}
\usepackage{pifont} 

\usepackage{enumitem}

\usepackage{xcolor}

\usepackage{graphicx}
\usepackage[hypcap=false]{caption}

\usepackage{hyphenat}

\usepackage{csquotes}

\usepackage[
  backend=biber,
  style=authoryear,
  language=british
]{biblatex}
\addbibresource{zotero.bib}

\usepackage[
  unicode=true,
  pdfencoding=auto,
  pdfborder={0 0 0},
  hidelinks
]{hyperref}


\usepackage{amsthm}

\newtheorem{proposition}{Proposition}[section]

\title{A Categorical Analysis of Large Language Models and Why LLMs Circumvent the Symbol Grounding Problem}
\author{Luciano Floridi\textsuperscript{1,2,*}, 
Yiyang Jia\textsuperscript{3,*},
Fernando Tohmé\textsuperscript{4,5}}

\date{} 

\begin{document}

\maketitle

\noindent\textsuperscript{1}Digital Ethics Center, Yale University, 85 Trumbull Street, New Haven, CT 06511, US\\
\textsuperscript{2}Department of Legal Studies, University of Bologna, Via Zamboni, 27/29, 40126, Bologna, IT\\
\textsuperscript{3}Department of Information Systems, Tokyo City University, 3-3-1 Ushikubo-Nishi,Tsuzuki-Ku,Yokohama,
Kanagawa 224-8551, JP\\
\textsuperscript{4}Departamento de Economía - Universidad Nacional del Sur, Ar\\
\textsuperscript{5}Instituto de Matemática de Bahía Blanca - CONICET, Ar\\
\textsuperscript{*}These authors contributed equally as the first co-authors.

\noindent Email for correspondence: luciano.floridi@yale.edu

\begin{abstract}
\noindent 
This paper presents a formal, categorical framework for analysing how humans and large language models (LLMs) transform content into truth-evaluated propositions about a state space of possible worlds \(W\), in order to argue that LLMs do not solve but circumvent the symbol grounding problem. Operating at an epistemological level of abstraction within the category of relations ($\mathbf{Rel}$), we model the human route \((H \to C \to \mathrm{Pred}(W))\)---consultation and interpretation of grounded content---and the artificial route \((H \to C' \to G \times C' \to O \to \mathrm{Pred}(W))\)---prompting a trained LLM and interpreting its outputs---together with the training pipeline \((C \to C' \to D(C') \to G)\). The framework distinguishes syntax from semantics, represents meanings as propositions within $\mathrm{Pred}(W)$ (the power set of $W$), and defines success as \textit{soundness (entailment)}: the success set $H^{\checkmark}_{\subseteq}$ where the AI's output \textit{set} $P_{\mathrm{AI}}(h)$ is a \textit{subset} of the human ground-truth set $P_{\mathrm{human}}(h)$. We then locate \textit{failure modes} at tokenisation, dataset construction, training generalisation, prompting ambiguity, inference stochasticity, and interpretation. On this basis, we advance the central thesis that LLMs lack unmediated access to \(W\) and therefore do not solve the symbol grounding problem. Instead, they circumvent it by exploiting pre-grounded human content. We further argue that apparent semantic competence is \textit{derivative}  of human experience, causal coupling, and normative practices, and that hallucinations are entailment failures ($P_{\mathrm{AI}}(h) \not\subseteq P_{\mathrm{human}}(h)$), which are intrinsic to this architecture, not mere implementation bugs. The categorical perspective clarifies debates clouded by anthropomorphic language, connects to extensions (e.g., probabilistic morphisms, partiality for refusals), and delineates the boundaries within which LLMs can serve as reliable epistemic interfaces. We discuss idealisations and scope limits, and conclude with some methodological guidance: expand $H^{\checkmark}_{\subseteq}$ through curation, tooling, and verification, while avoiding attributing any understanding to stochastic, pattern-completing systems.
\end{abstract}

\noindent\textbf{Keywords}: Artificial intelligence; Category Theory; Hallucinations; Large Language Models; Levels of Abstraction; Symbol Grounding Problem

\medskip 

\noindent\textbf{Conflict of interest}: the authors declare that they have no conflict of interest.

\medskip 

\noindent\textbf{Data availability}: not applicable.

\medskip 

\noindent\textbf{Acknowledgements}: we would like to thank Kate Boxer, Jérémie Koenig, Vincent Wang-Maścianica, Jessica Morley, Claudio Novelli, Jeffrey W Sanders, and David Watson for their helpful input, suggestions, and feedback on previous versions of this article. They are responsible only for its improvements and not for any remaining shortcomings. 

\section{Introduction}
\label{sec:introduction}

In this article, we present two main results, using the \textit{method of levels of abstraction} \parencite{floridi_method_2008} and \textit{category theory} (\textcite{awodey_category_2010, landry_categories_2017, patterson_knowledge_2017, kornell_axioms_2023}).

First, we model\footnote{In this article, we have tried to avoid the confusion between model as a verb, `model' as the outcome of the categorical modelling, and `model' as a synonym of Large Language Model. For the sake of clarity, we have used `analysis' and `framework' whenever the use of `model' could have been ambiguous. In some cases, such as in `model-theoretic semantics', we have relied on context to disambiguate the expression.} the epistemic relationship between human knowledge and Large Language Models (LLMs), specifying precisely when an LLM's output reaches the conclusions that an ideally well-informed human agent could reach. Second, we use this analysis to argue that LLMs circumvent rather than solve the symbol grounding problem (\parencite{harnad_symbol_1990}, \parencite{harnad_language_2024}).

The adoption of an appropriate level of abstraction (LoA) constitutes a fundamental methodological choice that determines which observables are considered relevant while systematically excluding extraneous details, including, but not limited to, implementation details. Our analysis operates at an epistemologically abstract yet informative level, treating humans, content repositories, and information (formalised as propositions) about \textit{possible worlds}\footnote{We use the standard Kripkean framework for modal semantics \parencite{kripke_naming_1980}.} as primary analytical components, while deliberately abstracting from the neurophysiological characteristics of human cognition and the statistical features of LLM architectures. This methodological approach follows the established insight that complex phenomena can be rigorously explained by analysing simpler, systematically interrelated components operating at an appropriate level of theoretical description \parencite{marr_vision_1982, floridi_method_2008}.\footnote{David Marr's influential tripartite analysis of levels---computational, algorithmic, and implementational---in cognitive science \parencite{marr_vision_1982} profoundly shaped contemporary approaches to understanding complex systems. The present framework primarily operates at Marr's computational and algorithmic levels for the human--AI system, systematically ignoring the implementational details of neural circuits or silicon architectures. This analytical choice is also consistent with Allen Newell's conception of the `knowledge level' in artificial intelligence \parencite{newell_knowledge_1982}, wherein an agent is characterised by the knowledge it possesses and the goals it pursues, abstracting entirely from how such knowledge is internally represented or processed.} Through explicit specification of this epistemological LoA, we avoid the error of conflating a system's symbolic operations with the semantic content of its productions, a fundamental mistake warned against in arguments such as Searle's Chinese Room thought experiment \parencite{searle_minds_1980} and Harnad's formulation of the symbol grounding problem \parencite{harnad_symbol_1990,taddeo_solving_2005,taddeo_praxical_2007}. Instead, the categorical analysis focuses on the systematic ways in which information is produced, used, and interpreted by human agents and LLMs to generate knowledge claims about some states of affairs (possible worlds). At this specified LoA, a human agent (a user characterised by particular background knowledge and epistemic goals) and an artificial agent (an LLM) are analysed as two distinct yet comparable epistemic agents that interface with corpora of informational content. Both agents consume content and produce statements that constitute claims about possible worlds. Figure~\ref{fig:1} summarises the components and processes operative in the different routes, which we will explain in detail in Section~\ref{sec:components}. By examining this categorical diagram, we seek to clarify the conditions under which LLMs can effectively function as informational interfaces that mediate between users and the vast repository of accumulated human content \parencite{floridi_ai_2023}, and to characterise the circumstances under which such systems fail to perform this mediating function.

The remainder of the article is structured as follows. We develop the categorical analysis in Sections~\ref{sec:components} and \ref{sec: categorical_analysis}, where we provide a side-by-side formal comparison of the human epistemic pathways and the LLM processing pathway from the initial query to the asserted answer. We refine the analysis in Sections~\ref{sec:refinements} and \ref{sec:sources of mismatch}, and provide a simple illustration in Section~\ref{sec:example}. In Section~\ref{sec:lessons}, we move to the discussion of two philosophical consequences of the analysis. The second main result is articulated in Section~\ref{sec:SGP}. There, we argue that LLMs do not solve the symbol grounding problem but instead circumvent it by building on previous analysis and discussing the current literature on the topic. In Section~\ref{sec:limitations}, we discuss some limitations of our approach. We draw some conclusions in Section~\ref{sec:conclusion}. In the Appendix 
\ref{sec:appendix}, we provide some necessary mathematical definitions to interpret the formal framework presented in the article.

\section{Categorical Framework}
\label{sec:components}
Our analysis is carried out in the category of relations, $\mathbf{Rel}$. Its objects are sets, and its morphisms $R\colon A\to B$ are relations $R\subseteq A\times B$, with identities given by diagonal relations and composition by relational composition. More precisely, $\mathbf{Rel}$ is defined as follows (see also Appendix A)\footnote{For introductions to Applied Category Theory, see \textcite{spivak_category_2014}, \textcite{fong_invitation_2019}, and \textcite{perrone_notes_2019}, now \textcite{perrone_starting_2024}.}

\begin{itemize}
    \item \textbf{Objects:} The objects are sets (the same as the objects in \textbf{Set}).
    \item \textbf{Morphisms:} A morphism $R: A \to B$ from a set $A$ to a set $B$ is a relation defined as a subset of the Cartesian product $R \subseteq A \times B$.
    \item \textbf{Identity:} The identity morphism $\mathrm{id}_A: A \to A$ is the diagonal relation $\mathrm{id}_A = \{(a, a) \mid a \in A\}$.
    \item \textbf{Composition:} Given two relations $R: A \to B$ and $S: B \to C$, their composite $S \circ R: A \to C$ is defined as:
    \[
        S \circ R = \{(a, c) \in A \times C \mid \exists b \in B, (a, b) \in R \text{ and } (b, c) \in S \}
    \]
\end{itemize}
	
The choice of \(\mathbf{Rel}\) is explicitly motivated by the fact that many of our key morphisms (such as interpretation, prompting, and evaluation) are `one-to-many' or inherently stochastic, and \(\mathbf{Rel}\) provides the mathematical rigour to handle these processes uniformly.

Within \(\mathbf{Rel}\), our framework consists of a subcategory $\mathcal{C}$, in which the objects and their characterisations are:

\begin{itemize}[leftmargin=2em]
    \item \(H\): human epistemic situations (background knowledge, goals, query context).
    \item \(C\): human-authored content, i.e.\ syntactically well-formed and meaningful text.
    \item \(C^{\prime}\): tokenised strings, the machine-readable version of content and prompts.
    \item \(D(C^{\prime})\): datasets constructed from \(C^{\prime}\), abstracting from concrete sampling and cleaning procedures.
    \item \(G\): space of trained LLMs, with each \(g\in G\) representing a particular set of weights.
    \item \(O\): LLM outputs, as sequences of tokens.
    \item \(W\): space of (relevant) possible worlds or states of affairs.
    \item \(\mathrm{Pred}(W)=\mathcal{P}(W)\): the powerset of \(W\), seen as the space of propositions about \(W\). \(\mathrm{Pred}(W)\) is regarded as a poset under inclusion, which allows us to use set inclusion \(\subseteq\) to define entailment and compare the soundness of the human and AI routes.
\end{itemize}
	
\noindent Figures~\ref{fig:1} and \ref{fig:2} summarise the category $\mathcal{C}$.

\begin{figure}[htbp]
\centering
\includegraphics[width=.7\linewidth]{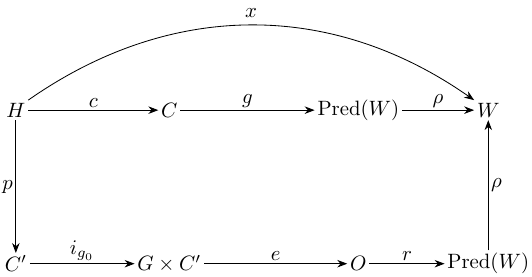}
\caption{Schematic representation of the human and LLM routes in the category $\mathcal{C}$, a subcategory of $\mathbf{Rel}$. The diagram illustrates three possible mappings from the space of human epistemic states $H$ to the state space of possible worlds $W$.}
\label{fig:1}
\end{figure}

\begin{figure}[htbp]
\centering
\includegraphics[width=.7\linewidth]{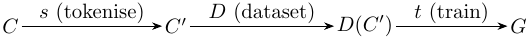}
\caption{The training pipeline: from content to the space of trained models.}
\label{fig:2}
\end{figure}

\noindent These diagrams involve the morphisms in $\mathcal{C}$. Apart from the identities, these morphisms are defined as follows (with their interpretations in parentheses):

\begin{itemize}[leftmargin=*]
\item[] $x: H \to W$ (experience)
\item[] $c: H \to C$ (consult content)
\item[] $g: C \to \mathrm{Pred}(W)$ (interpret content)
\item[] $p: H \to C^{\prime}$ (generate prompt)
\item[] $s: C \to C^{\prime}$ (tokenise)
\item[] $D: C^{\prime} \to D(C^{\prime})$ (construct dataset)
\item[] $t: D(C^{\prime}) \to G$ (train to obtain a model)
\item[] $i_{g_0}: C^{\prime} \to G \times C^{\prime}$ (pair with a fixed model $g_0$). This is a deterministic morphism (a function) given by the relation $\{(c', (g_0, c')) \mid c' \in C'\}$.
\item[] $e: G \times C^{\prime} \to O$ (evaluate)
\item[] $r: O \to \mathrm{Pred}(W)$ (assign semantics; this is the user's interpretation step)
\item[] $\rho: \mathrm{Pred}(W) \to W$ (resolve reference; indicates that $\mathrm{Pred}(W)$ is about $W$)
\end{itemize}

\noindent We assume a non-standard notion of commutativity for the diagrams, which we call \emph{entailment-commutativity} (or \textit{lax commutativity}). This implies that the AI path is contained within the human path globally:
\[
\forall h \in H, \quad
(r \circ e \circ i_{g_0} \circ p)(h) \subseteq (g \circ c)(h)
\quad\text{instead of}\quad
(r \circ e \circ i_{g_0} \circ p)(h) = (g \circ c)(h).
\]

\noindent This shift replaces the usual requirement of path equality with the inclusion of relations. It can be better understood by conceiving $\mathcal{C}$ as a $2$-category, inheriting the properties of $\mathbf{Rel}$. The cells are as follows:

\begin{itemize}
    \item \textit{$0$-cells (Objects):} Sets (e.g., $H$, $C^{\prime}$, $\mathrm{Pred}(W)$).
    \item \textit{$1$-cells (Morphisms):} Relations (e.g., $p$, $g \circ c$).
    \item \textit{$2$-cells (Transformations):} Inclusions. Given two parallel relations $R, S: A \to B$, a $2$-morphism $R \Rightarrow S$ exists if and only if $R \subseteq S$ (i.e., $R$ is a subset of $S$ in $A \times B$).
\end{itemize}

\noindent In this context, the {\it entailment-commutativity} condition is precisely a statement about the existence of a specific $2$-cell. We can characterise this property using the universal property of the \textit{right Kan extension}.

\noindent Consider the composite relation $g \circ c: H \to \mathrm{Pred}(W)$ and the prompting relation $p: H \to C^{\prime}$. We seek to characterise the optimal path from $C^{\prime}$ to $\mathrm{Pred}(W)$ that serves as the soundness benchmark for the AI route $r \circ e \circ i_{g_0}$.

\noindent The \textit{right Kan extension} of $g\circ c$ along $p$, denoted as $\mathrm{Ran}_p(g\circ c)$, is defined by a universal property in $\mathbf{Rel}$. 
Formally, as a morphism in $\mathbf{Rel}$, it can be constructed as the union of all sound relations:

$$\mathrm{Ran}_p(g \circ c) = \bigcup \{ R \subseteq C' \times \mathrm{Pred}(W) \mid R \circ p \subseteq g \circ c \}$$

\noindent Diagrammatically:
	
	\begin{center}
		\begin{tikzcd}[row sep=large, column sep=large]
			H \arrow[rr, "g \circ c" {name=Human}] \arrow[d, "p"'] & & \text{Pred}(W) \\
			C' \arrow[urr, "\text{Ran}_p(g \circ c)"' {name=AI}, dashed] & &
			\arrow[Rightarrow, from=AI, to=Human, "\subseteq" description]
		\end{tikzcd}
\end{center}

\noindent However, its operational behaviour is best understood pointwise. For any specific prompt $c' \in C'$, the value of this maximal relation is computed as the limit (intersection) of the ground-truths for all compatible epistemic situations:


\[
\mathrm{Ran}_p(g \circ c)(c') = \bigcap \{ (g \circ c)(h) \mid h \in H, p(h) = c' \} = \bigcap_{h \in p^{-1}(c')} (g \circ c)(h)
\]

\noindent This pointwise intersection captures the core intuition of soundness: to be safe under ambiguity, the AI must output only the common truths shared by all interpretations.

In the diagram of Figure~\ref{fig:1}, $g \circ c$ and $r \circ e \circ i_{g_0}$ are the {\em human} and the LLM routes, respectively. That is:

\begin{itemize}
\item The upper human route is modelled as a composite relation
	\[
	g\circ c \colon H \longrightarrow \mathrm{Pred}(W),
	\]
	where:
	\begin{itemize}[leftmargin=2em]
		\item \(c\colon H\to C\) represents the consultation of relevant content given an epistemic situation \(h\in H\).
		\item \(g\colon C\to \mathrm{Pred}(W)\) maps content to plausible propositions, allowing multiple, possibly inconsistent readings.
	\end{itemize}
	For each \(h\in H\), the set
	\[
	P_{\mathrm{human}}(h)\coloneqq (g\circ c)(h)\subseteq \mathrm{Pred}(W)
	\]
    
	is defined as the human ground-truth benchmark for that epistemic situation, namely the set of propositions a suitably idealised human could draw from consultation and interpretation of reliable content.
	
	In addition, an experience arrow \(x\colon H\to W\) represents direct human coupling with the world (perception, interaction).\footnote{This arrow does not appear in the LLM route and plays a mainly conceptual role in the symbol-grounding discussion.}

\item 	The lower LLM route is another composite in \(\mathbf{Rel}\),
	\[
	r\circ e\circ i_{g_{0}}\circ p \colon H \longrightarrow \mathrm{Pred}(W),
	\]
	where:
    
	\begin{itemize}[leftmargin=2em]
		  \item \(p\colon H\to C^{\prime}\) associates possible tokenised prompts with an epistemic situation.
		\item \(i_{g_{0}}\colon C^{\prime}\to G\times C^{\prime}\) is a deterministic pairing with a fixed trained model \(g_{0}\in G\).        
		\item \(e\colon G\times C^{\prime}\to O\) is the (stochastic) evaluation relation that models sampling during inference.
		\item \(r\colon O\to \mathrm{Pred}(W)\) interprets the output strings as sets of plausible propositions, again allowing ambiguity.
	\end{itemize}
\noindent	For each \(h\in H\), the LLM route yields a set
	\[
	P_{\mathrm{AI}}(h)\coloneqq (r\circ e\circ i_{g_{0}}\circ p)(h) \subseteq \mathrm{Pred}(W),
	\]    
\noindent which is the set of propositions ascribed (by a human interpreter) to the LLM output for that query.
\end{itemize}

\noindent The offline training pipeline depicted in Figure~\ref{fig:2} is also packaged as a composite relation
	\[
	t\circ D\circ s \colon C \longrightarrow G,
	\]
	where \(s\colon C\to C^{\prime}\) tokenises content, \(D\colon C^{\prime} \to D(C^{\prime})\) constructs datasets, and \(t\colon D(C^{\prime})\to G\) models (stochastic) training, so that a fixed \(g_{0}\) is one value of \(t(d)\) for some dataset \(d\in D(C^{\prime})\).

\noindent The commutativity of the diagram in Figure~\ref{fig:1} is formulated as an entailment relation in \(\mathrm{Pred}(W)\). We say that for each \(h\in H\), the LLM route is sound (non-hallucinatory) if
	\[
	P_{\mathrm{AI}}(h)\subseteq P_{\mathrm{human}}(h),
	\]
and the \emph{soundness set} is defined by
	\[
	H^{\ast}\coloneqq\{\,h\in H \mid P_{\mathrm{AI}}(h)\subseteq P_{\mathrm{human}}(h)\,\}.
	\]

\noindent This soundness property can be naturally reframed using the right Kan extension. Recall that, as defined above, the pointwise value of the right Kan extension is the intersection of ground truths.
    
\noindent For any specific $h \in H$, since $h$ is one of the possible situations that generate the prompt (i.e., $h \in p^{-1}(p(h))$), the intersection of all such situations must be a subset of the specific component $P_{\mathrm{human}}(h)$. Therefore:
   \[
   \text{Ran}_p(g \circ c)(p(h)) \subseteq P_{\mathrm{human}}(h)
   \]

\noindent Then, we have:
\begin{proposition}
    \label{prop:kan_soundness}
    For any $h \in H$:
    \begin{enumerate}
        \item If $P_{\mathrm{AI}}(h) \subseteq \mathrm{Ran}_p(g \circ c)(p(h))$, then $h \in H^{\ast}$.
        \item If the morphism $p$ is an injective function, then $h \in H^{\ast}$ implies $P_{\mathrm{AI}}(h) \subseteq \mathrm{Ran}_p(g \circ c)(p(h))$.
    \end{enumerate}
\end{proposition}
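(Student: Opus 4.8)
The plan is to read off both implications directly from the pointwise description of the right Kan extension, $\mathrm{Ran}_p(g\circ c)(c') = \bigcap_{h'\in p^{-1}(c')}(g\circ c)(h')$, combined with the elementary observation (already recorded in the paragraph preceding the statement) that $h$ always belongs to its own fibre $p^{-1}(p(h))$.

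For part (1), I would argue as follows. Since $h\in p^{-1}(p(h))$, the set $P_{\mathrm{human}}(h)=(g\circ c)(h)$ is one of the terms of the intersection defining $\mathrm{Ran}_p(g\circ c)(p(h))$, so $\mathrm{Ran}_p(g\circ c)(p(h))\subseteq P_{\mathrm{human}}(h)$. Assuming the hypothesis $P_{\mathrm{AI}}(h)\subseteq\mathrm{Ran}_p(g\circ c)(p(h))$ and chaining the two inclusions by transitivity of $\subseteq$ gives $P_{\mathrm{AI}}(h)\subseteq P_{\mathrm{human}}(h)$, which is precisely the membership condition for $h\in H^{\ast}$. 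No hypothesis on $p$ is needed here.

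For part (2), the role of the assumption is to collapse the fibre to a singleton. If $p$ is a function then $p(h)$ is a well-defined element of $C'$, and if in addition $p$ is injective then $p(h')=p(h)$ forces $h'=h$; hence $p^{-1}(p(h))=\{h\}$ and the pointwise formula degenerates to $\mathrm{Ran}_p(g\circ c)(p(h))=(g\circ c)(h)=P_{\mathrm{human}}(h)$. Given $h\in H^{\ast}$, i.e.\ $P_{\mathrm{AI}}(h)\subseteq P_{\mathrm{human}}(h)$, we conclude $P_{\mathrm{AI}}(h)\subseteq\mathrm{Ran}_p(g\circ c)(p(h))$ immediately.

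There is essentially no deep obstacle: both halves are one-line applications of transitivity of set inclusion. The only points that deserve care are bookkeeping ones — checking that the fibre $p^{-1}(p(h))$ is nonempty (it contains $h$), so that the degenerate convention $\bigcap\emptyset=\mathrm{Pred}(W)$ is never invoked at the point $p(h)$, and making explicit that ``injective function'' is exactly the hypothesis that shrinks this fibre to $\{h\}$ (so that, without injectivity, part (2) can genuinely fail whenever two distinct epistemic situations share a prompt but disagree on their ground-truth sets). I would therefore keep the write-up short and resist restating it in 2-categorical or Kan-extension generality.
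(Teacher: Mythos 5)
Your proof is correct and follows essentially the same route as the paper's: part (1) uses the fact that the pointwise intersection is contained in its component at $h$ plus transitivity of $\subseteq$, and part (2) collapses the fibre $p^{-1}(p(h))$ to $\{h\}$ under injectivity. Your added remark that the fibre is nonempty (so the empty-intersection convention is never triggered) is a small but worthwhile precision the paper leaves implicit.
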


\begin{proof}
    \begin{enumerate}
\item Assume that $P_{\mathrm{AI}}(h) \subseteq \mathrm{Ran}_p(g \circ c)(p(h))$. Recall that by definition, the intersection is a subset of each component, so $\mathrm{Ran}_p(g \circ c)(p(h)) \subseteq P_{\mathrm{human}}(h)$. By transitivity, we have $P_{\mathrm{AI}}(h) \subseteq P_{\mathrm{human}}(h)$, and thus $h \in H^{\ast}$.

\item Assume that $h \in H^{\ast}$, i.e., $P_{\mathrm{AI}}(h) \subseteq P_{\mathrm{human}}(h)$. We need to prove that:
        \[ P_{\mathrm{AI}}(h) \subseteq \bigcap_{h' \in p^{-1}(p(h))} P_{\mathrm{human}}(h') \]

\noindent If we assume that $p$ is an injective function, the fibre $p^{-1}(p(h))$ is the singleton $\{h\}$. In this case:
        \[ \mathrm{Ran}_p(g \circ c)(p(h)) = \bigcap_{x \in \{h\}} P_{\mathrm{human}}(x) = P_{\mathrm{human}}(h) \]
        \noindent Since $P_{\mathrm{AI}}(h) \subseteq P_{\mathrm{human}}(h)$, it follows directly that $P_{\mathrm{AI}}(h) \subseteq \mathrm{Ran}_p(g \circ c)(p(h))$.
\end{enumerate}
\end{proof}

\noindent In the general case of ambiguous prompting, the right Kan extension is a \emph{stricter} condition than simple soundness: it requires the LLM to be sound across all possible interpretations of the prompt, not just the current one.

A stricter \emph{world-level agreement set} \(H^{!} \subseteq H\) is defined by requiring that, after applying a resolver morphism \(\rho\colon \mathrm{Pred}(W)\to W\) (designating actual worlds where propositions hold), both routes select the same unique world.

Formally, if \(W_{\mathrm{human}}(h)\) and \(W_{\mathrm{AI}}(h)\) are the corresponding sets of worlds, then
	\[
	H^{!}\coloneqq\{\,h\in H\mid W_{\mathrm{human}}(h)=W_{\mathrm{AI}}(h)=\{w\}\text{ for some }w\in W\,\}.
	\]

\noindent Conceptually, we expect that \(H^{!} \subseteq H^{\ast}\), meaning that achieving strict world-level agreement implies soundness (non-hallucination). Thus, hallucinations in LLMs can be characterised as \textit{failures of entailment}. That is, given \(h\in H\), a proposition \(P\in \mathrm{Pred}(W)\) is a hallucination if \(P\in P_{\mathrm{AI}}(h)\) but \(P\notin P_{\mathrm{human}}(h)\). Consequently, the system is in a hallucinatory state for a situation \(h\) if and only if \(h\notin H^{\ast}\).

Propositions can be non-hallucinatory yet still not true. To connect these two concepts, we start by designating a specific world \(w^\ast \in W\) as the ``actual world''. The truth of a proposition \(P \in \mathrm{Pred}(W)\) (i.e., \(P \subseteq W\)) at this designated world \(w^\ast\) can be defined according to the standard Kripkean satisfaction relation \(\Vdash\):

\[
w^\ast \Vdash P \;\Longleftrightarrow\; w^\ast \in P.
\]

\noindent Then, given the \textit{soundness success set} \(H^{\ast}\) (the set of all \(h\) where no hallucinations occur), if an \(h \in H^{\ast}\) leads to a proposition \(P\) that is false at \(w^\ast\), it implies that \(P\) must have already been present in the human ground-truth set \(P_{\mathrm{human}}(h)\).\footnote{Note that this means treating hallucinations as equivalent to \textit{unwarranted} yet not simply \textit{untrue} statements; that is, as any output that is not warranted by the evidence available to the system for the task at hand---i.e., it is unsupported by the provided context, retrievable or referenced sources, or the input signal---and therefore fails to correspond to what is known or stipulated to be true for that task. This includes content that is \textit{fabricated} (introduced without an evidential basis), \textit{contradicts} the evidence provided, or \textit{extends} beyond the scope of the input in ways that cannot be justified by task norms. This distinction clarifies that hallucinations may still be true given a different benchmark (e.g., accidental truth).}

To analyse in more detail the distinction between non-hallucinatory and true propositions, we have to distinguish the following concepts:

\begin{itemize}

\item \textit{Data}: sequences of machine-readable tokens. Here, the elements of $C^{\prime}$.

\item \textit{Content}: data sets that are syntactically well-formed and semantically meaningful. We identify them here with the image of the evaluation morphism $\mathrm{Im}(e) \subseteq O$.

\item \textit{Information}: \textit{truthful} content. Elements $\bar{e} \in \mathrm{Im}(e) \subseteq O$ such that the actual world $w^{\ast}$ is compatible with their interpretation (i.e., $w^{\ast} \in (\rho \circ r)(\bar{e})$).
\item \textit{Knowledge}: explained, justified, warranted, or otherwise supported information in non-Gettierised and non-sceptical contexts.
   
\end{itemize}

\noindent In light of these distinctions, we can incorporate into our analysis the upper arrow of Figure~\ref{fig:1}, which does not intervene in our analysis of hallucinations. That is, the morphism $x: H \to W$ that we interpret as an \textbf{experience relation}.

This morphism captures the experience of a possible world, e.g., seeing the whiteness of some snow. More precisely, it maps a human uncertainty state $h \in H$ to the set $x(h) \subseteq W$ of all possible worlds compatible with that state. For example, if $h$ represents ``uncertainty about the colour of snow'', $x(h)$ might contain multiple worlds, e.g., $\{w_{\text{white}}, w_{\text{gray}}\}$.

The human and LLM routes are processes that attempt to \textit{resolve} this initial uncertainty by consulting propositions. Recall the two distinct levels of successful resolution:

\begin{itemize}[leftmargin=*]
    \item \textbf{Sound Agreement ($H^{\ast}$):} This is the set in which the LLM path is \textit{sound} (non-hallucinatory) relative to the human path. It allows the response to be ambiguous (since $P_{\mathrm{human}}(h)$ might include multiple worlds) or incomplete.
    \item \textbf{Unique Agreement ($H^{!}$):} This is the most stringent set in which both paths successfully resolve the initial uncertainty down to the same unique world.
\end{itemize}

\noindent Then we have:

\begin{proposition}
If $h^{\ast} \in H^{!}$ is such that the experience maps to the actual world (i.e., $x(h^{\ast}) = \{w^{\ast}\}$), then the output $\bar{e} = (e \circ i_{g_0} \circ p)(h^{\ast})$ constitutes \emph{knowledge}.
\end{proposition}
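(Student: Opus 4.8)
The plan is to verify, clause by clause, the paper's own definition of \emph{knowledge}: that $\bar e = (e \circ i_{g_0} \circ p)(h^{\ast})$ is \emph{content}, that it is moreover \emph{information} (truthful), and finally that it is \emph{warranted} and arrived at non-accidentally in a non-sceptical setting. I read $\bar e$ as ranging over the (necessarily meaningful) elements of this output set, or, if one prefers the relevant relations to be functional at $h^{\ast}$, as its unique element. The \emph{content} clause is immediate: since $\bar e$ lies in the image of a composite whose last factor is $e$, we have $\bar e \in \mathrm{Im}(e)$, which is exactly the stipulated condition for being content.

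The substantive formal core is the \emph{information} clause, and this is where the hypotheses $h^{\ast} \in H^{!}$ and $x(h^{\ast}) = \{w^{\ast}\}$ are used. Unfolding definitions, $W_{\mathrm{AI}}(h^{\ast}) = (\rho \circ r \circ e \circ i_{g_0} \circ p)(h^{\ast})$ and $W_{\mathrm{human}}(h^{\ast}) = (\rho \circ g \circ c)(h^{\ast})$, and membership in $H^{!}$ forces both to equal a common singleton $\{w\}$. It then remains to identify $w$ with the actual world $w^{\ast}$: here I invoke the framework's standing idealisation that the human route consults \emph{reliable} content and interprets it as a \emph{suitably idealised} agent would, so the worlds it selects are consistent with direct experience; combined with $x(h^{\ast}) = \{w^{\ast}\}$ this yields $w^{\ast} \in W_{\mathrm{human}}(h^{\ast})$, hence $W_{\mathrm{human}}(h^{\ast}) = W_{\mathrm{AI}}(h^{\ast}) = \{w^{\ast}\}$. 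Since $W_{\mathrm{AI}}(h^{\ast})$ is the union of the sets $(\rho \circ r)(\bar e)$ over the outputs in the fibre, and each such set is nonempty (the outputs are content, hence semantically meaningful), each equals $\{w^{\ast}\}$; in particular $w^{\ast} \in (\rho \circ r)(\bar e)$, which is precisely the defining condition for $\bar e$ to be information.

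For the remaining clause I appeal to the inclusion $H^{!} \subseteq H^{\ast}$ anticipated in the text, here taken as a hypothesis. At $h^{\ast}$ the AI route is then sound, $P_{\mathrm{AI}}(h^{\ast}) \subseteq P_{\mathrm{human}}(h^{\ast})$, so the propositional content the interpreter ascribes to $\bar e$ is content a reliable human could have drawn from grounded sources: this supplies the warrant. Moreover, at the unique-agreement level the truth of $\bar e$ at $w^{\ast}$ is not a lucky coincidence, since it is inherited, through $g \circ c$, from content that is itself grounded in experience with $x(h^{\ast}) = \{w^{\ast}\}$; so the case is not Gettierised, and, with the framework's assumption of a non-sceptical context, all three clauses of the definition are met and $\bar e$ is knowledge.

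The main obstacle is conceptual rather than computational: the step identifying the common world $w$ with $w^{\ast}$ does not follow from the stated morphisms alone but needs an explicit bridging assumption linking the experience relation $x$ to the human consultation route $g \circ c$ — the ``reliability'' of idealised consultation. I would either state this as an added hypothesis or fold it into the characterisations of $H$ and $C$; once it is in place, the information clause is bookkeeping with the definitions, and the warrant/non-Gettier clause is discharged by the already-available soundness inclusion $H^{!} \subseteq H^{\ast}$ together with the informal epistemological reading the paper has fixed.
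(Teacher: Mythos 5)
Your proof takes essentially the same route as the paper's own (one-sentence) argument: unfold the definition of \emph{knowledge} into its content, information, and warrant clauses, and discharge each using $h^{\ast} \in H^{!}$, the hypothesis $x(h^{\ast}) = \{w^{\ast}\}$, and the anticipated inclusion $H^{!} \subseteq H^{\ast}$. The one place you go beyond the paper is in flagging that membership in $H^{!}$ only guarantees agreement on \emph{some} singleton $\{w\}$, and that identifying $w$ with $w^{\ast}$ requires a bridging assumption linking the experience relation $x$ to the consultation route $g \circ c$ --- the paper's proof simply asserts this identification, so your explicit statement of the needed hypothesis is a gain in rigour rather than a divergence in method.
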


\begin{proof}
    Trivial. If $w^{\ast}$ is the actual world, and the human experience resolves the uncertainty in the unique agreement situation $h^{\ast} \in H^{!}$ (meaning $W_{\mathrm{human}}(h^{\ast}) = W_{\mathrm{AI}}(h^{\ast}) = \{w^{\ast}\}$), then the interpretation of the output yields the actual world: $(\rho \circ r)(\bar{e}) = w^{\ast}$. Thus, $\bar{e} \in O$ satisfies the condition of being a truthful content warranted by both experience ($x$) and the reliable alignment of the process, which corresponds to knowledge.
\end{proof}

\section{Interpretation of the categorical framework}
\label{sec: categorical_analysis}

\noindent In this section, we provide a detailed interpretation of the components of category $\mathcal{C}$, endowing these objects within $\mathbf{Rel}$ with clear epistemic features.

\paragraph{}
\begin{itemize}
\item \emph{$H$ (human epistemic situations)}. This component represents the state of a human user situated within a context of inquiry. A human epistemic situation encompasses the person's background knowledge and information, their specific epistemic need or query, and the broader environmental or task context within which the inquiry arises. Essentially, $H$ captures the theoretical notion of a user having a question and actively seeking information to resolve an epistemic gap. By explicitly marking the human's situational context, we acknowledge that a user's cognitive state---including what they already know or are at least informed about, and what they are trying to know or are at least seeking to be informed about---constitutes an integral component of the epistemic process, rather than an external factor. In cognitive science terminology, this corresponds to the \textit{problem context} that an agent brings to bear when approaching a knowledge source \parencite{newell_knowledge_1982}. This conceptualisation also aligns with the notion of how external tools and informational resources become constitutive elements of a `cognitive situation' for an agent, as advanced in the extended mind thesis \parencite{clark_extended_1998}.

\item \emph{$C$ (content repository)}. This component represents the corpus of human-authored content that serves as an informational resource. This includes books, scholarly articles, websites, encyclopedias, research papers, and other sources created by human agents that one might consult to answer epistemic queries. $C$ can be conceptualised as representing an extensive library of all available content on the Internet. This component emphasizes that human knowledge is characteristically recorded externally in linguistic form, constituting `environmental information' \parencite{floridi_philosophy_2011}. In particular, the content \textit{C} consists of symbols that have been endowed with syntax and semantics by human authors \parencite{floridi_philosophy_2011}. 

In Shannon and Weaver's information theory \parencite{shannon_mathematical_1949}, semantics is explicitly ignored because it is irrelevant to the engineering problem of data transfer. In this article, we operate at a LoA where semantics is paramount. Some recent work also supports this view. For example, 
\textcite{bai_forget_2025} proposes a semantic information theory that focuses on the Level-B semantic problem—``How precisely do the transmitted symbols convey the desired meaning?''—and argues that the \textbf{tokens} should be regarded as the fundamental unit of analysis. Our concern is with truth-bearing propositions, which both human agents and artificial systems aim to produce. We therefore define $C$ as the repository of syntactically well-formed and semantically meaningful content. The content in $C$ itself may contain untruths,\footnote{We do not use the term `false' or `falsehood' to avoid any discussion about \textit{bivalence}, which would be irrelevant here. Readers who have no problems with bivalence can read `untruths' as equivalent to `falsehoods'.} reflecting the mixed quality of any real-world corpus. 

In this context, we define $g$ as a relation, i.e., a morphism in $\mathcal{C}$. It maps a content $c \in C$ to the set of all its plausible propositions $g(c) \subseteq \mathrm{Pred}(W)$. This set may include various (and possibly contradictory or untruthful) interpretations, realistically reflecting the mixed quality of the content in $C$ itself.
The benchmark for success is clarified by the output set of the entire composite human path for a given $h$, which we denote $P_{\mathrm{human}}(h) = (g \circ c)(h)$. This set represents the ground-truth against which the LLM's output set, $P_{\mathrm{AI}}(h)$, is measured. This measurement uses the success criterion of entailment ($P_{\mathrm{AI}}(h) \subseteq P_{\mathrm{human}}(h)$). We shall return to this methodological choice in Section~\ref{sec:limitations}.

\item \emph{$C^{\prime}$ (tokenised content/prompts)}. This represents content converted into token sequences (typically subwords or character sequences) that LLMs can process. Unlike humans who comprehend natural language directly, LLMs operate exclusively on numerical token identifiers. $C^{\prime}$ encompasses both training data and user prompts in machine-readable format. Similarly, all content comprising the LLM's training data originally consisted of human-readable content that was tokenised in $C^{\prime}$ format. By including $C^{\prime}$ as an explicit analytical stage, we emphasise a fundamental difference in how human and artificial agents access content: humans read linguistic expressions for their semantic content, whereas LLMs process numerical token identifiers. This distinction can obscure important nuances; for instance, some information may be lost or distorted during the tokenisation process, constituting one potential source of the mismatches we will analyse. At the LoA used in our diagram, we include $C^{\prime}$ to acknowledge explicitly that a transformation occurs from content to machine-readable format.

\item \emph{$D(C^{\prime})$ (training datasets constructed from tokenised content)}. 
This represents structured training data: curated subsets of tokenised content from journals, books, websites, and other sources. At the chosen LoA, $D(C^{\prime})$ abstracts away many details concerning how the dataset is prepared---including sampling methodologies, cleaning procedures, or preprocessing steps---emphasising that LLM training material derives entirely from human-generated content. This component serves as a reminder that an LLM's apparent information ultimately originates from human textual production, albeit in computationally processed form. This observation aligns with the critical insight that contemporary LLMs do not possess independent information in isolation but are epistemically parasitic (in a technically neutral sense, see \parencite{harnad_language_2024}) upon content that humans have created. As Bender et al.\ memorably characterise it \parencite{bender_dangers_2021}, LLMs function as `stochastic parrots', synthesising patterns from their training data without any understanding. Within this framework, $D(C^{\prime})$ represents the parrot's `feeding corpus', so to speak.

\item \emph{$G$ (space of trained LLMs)}. This component represents the mathematical space of all possible LLM states that could arise during training. One can conceptualize $G$ as an abstract parameter space in which each point $g \in G$ corresponds to a fully trained LLM characterised by particular learnt parameters. When we `train an LLM', we are essentially selecting one specific point from this space. For example, each member of the series GPT-\textit{x} corresponds to a particular $g_0$ in this space $G$, representing the specific set of weights obtained after training on a designated dataset. By treating $G$ as a formal object in the diagram, we emphasise that the LLM (once trained) constitutes an artefact constructed from data processing. This conceptualisation also acknowledges the notion in machine learning theory of a hypothesis space or parameter space. Importantly, $G$ is conceptually distinct from the dataset $D(C^{\prime})$; $G$ contains the results of the training process (the trained LLMs), while $D(C')$ includes the data itself. In more philosophical terms, if $C$ represents the realm of explicit content (documents and texts), then $G$ represents the realm of implicit content or learnt `dispositions', encoded within the LLM's weight parameters. This distinction parallels the classical dichotomy in epistemology between explicit propositional knowledge (knowing that) and embodied procedural knowledge (knowing how), although an LLM's `know-how' consists of statistical correlations rather than any understanding \parencite{floridi_ai_2023}. Throughout our analysis, we will focus on a single, trained LLM $g_0 \in G$, for instance, a fixed instantiation of the GPT-x series that we use to answer queries.

\item \emph{$O$ (outputs)}. $O$ represents the outputs generated by the LLM in response to a given prompt. This typically consists of a sequence of tokens that can be detokenised into human-readable text. For instance, if the prompt is `What is the capital of France?', the LLM's output might consist of the token sequence corresponding to `Paris'. We include $O$ to mark the precise point at which the LLM produces something in response to an input. It is crucial to note that $O$ is formally distinct from $\mathrm{Pred}(W)$, a point we will address subsequently. $O$ represents merely the data (string or artefact) that the machine produces, and which require interpretation to become meaningful. The systematic separation of $O$ from $\mathrm{Pred}(W)$ reflects the fundamental insight that a sentence does not yet constitute a statement about \textit{W} until its semantic content is determined. A generated output such as `Paris' constitutes, in itself, merely a string of five letters. We must map it to its propositional content (`Paris is the capital of France') to assess its truth value. This separation proves crucial for avoiding the conceptual confusion between strings of symbols (linguistic expression) and meanings, a confusion that lies at the heart of why an LLM can appear knowledgeable through the production of plausible text without actually `knowing' in the human sense \parencite{searle_minds_1980, floridi_ai_2023}.

\item \emph{$W$ (possible worlds)}. $W$ formally represents a state space of relevant, possible worlds (including the actual one) that constitute the subject matter of our discourse. It encompasses what statements are about. For example, $W$ includes facts such as the geographical and political relationship between Paris and France, Canberra and Australia, Coruscant and the Old Republic, or any other states of affairs to which claims can or fail to \textit{correspond}.\footnote{For the sake of simplicity, we are assuming a correspondence theory of truth à la Tarski, but in other contexts, one of the authors (LF) has defended a ``correctness'' approach (\parencite{floridi_semantic_2011}).} In abstract terms, $W$ functions as the truth-maker for propositions: it constitutes the domain that determines what truth value qualifies any given claim. Philosophers routinely invoke `the world' in discussions of truth and reference. Here, we include $W$ to provide an explicit component in the diagram that serves as the ultimate referent for all our interpretations. This helps maintain theoretical clarity about what we ultimately care about: that the information  provided (whether by content or by the LLM) accurately reflects something about the state of affairs in which we are interested.

\item \emph{$\mathrm{Pred}(W)$ (propositions about $W$).} This component is essential because it formally represents the claims or propositions about $W$ that can be formulated. $\mathrm{Pred}(W)$ can be conceptualised as the space of propositions that possess a truth-value with respect to $W$. One can think of $\mathrm{Pred}(W)$ as analogous to the power set of $W$ or some logical space of propositions. For our analytical purposes, it constitutes the target space for both content and LLM outputs. We map things to $\mathrm{Pred}(W)$ rather than directly to $W$ because we want to maintain the crucial distinction between talking about a possible world and the possible world itself. A sentence such as `Paris is the capital of France' does not literally constitute the city of Paris or the country of France; it represents a statement about that state of affairs. 

By employing $\mathrm{Pred}(W)$, we ensure that the diagram compares like with like: for any human epistemic situation $h \in H$, the human route yields a set of propositions ($P_{\mathrm{human}}(h)$), and the LLM route yields a set of propositions ($P_{\mathrm{AI}}(h)$). These sets can then be formally compared. This reflects a methodological concern about maintaining logical precision. It aligns with the long-established tradition in logic and philosophy of language that emphasizes the fundamental importance of semantics as the systematic mapping of language to propositional content \parencite{montague_universal_1970,dretske_knowledge_1981}. By operating at the level of propositions, we avoid conceptual confusion between syntax and semantics, as well as disquotational confusions \parencite{tarski_semantic_1944,quine_philosophy_1970}, a point to which we will return in Section~\ref{sec:lessons}. In other words, $\mathrm{Pred}(W)$ constitutes the domain in which meaning resides, while $W$ represents the domain in which truth is evaluated. This explicit separation in the diagram enables us to define ``success'' formally---such as the soundness: $P_{\mathrm{AI}}(h) \subseteq P_{\mathrm{human}}(h)$---by comparing these output sets within $\mathrm{Pred}(W)$.

\end{itemize}

\noindent Having systematically described the components $H$, $C$, $C'$, $D(C')$, $G$, $O$, $W$, and $\mathrm{Pred}(W)$, we now have the complete list of objects of $\mathcal{C}$, which are the theoretical entities for our framework: a human agent in an epistemic context ($H$) seeking information (or even knowledge) about some possible worlds ($W$), a repository of content ($C$) which can be tokenised (to $C^{\prime}$) and used to train a LLM (via $D(C^{\prime})$ into some $g_0 \in G$), such that the LLM's outputs ($O$) in response to prompts correspond (or fail to correspond) to meaningful propositions ($\mathrm{Pred}(W)$) about the same $W$. 

The structure of $\mathcal{C}$ is summarised in the diagram in Figure~\ref{fig:1}, which depicts a comparison between two distinct routes from a human epistemic situation ($H$) to the space of propositions about the world ($\mathrm{Pred}(W)$). One route represents the set of propositions that a human agent can derive using external informational resources. The alternative route captures the set of propositions that an LLM produces. Through side-by-side examination of these pathways, we can precisely articulate what it means for them to agree (defined as entailment), i.e., that the LLM's output set is a subset of the human path's set, and systematically analyse why they diverge/produce hallucinations. More precisely:

\begin{itemize}
\item The {\bf human epistemic path} is the upper pathway in the diagram. Starting from a human epistemic situation $h \in H$ (representing a person with a query or informational need), 
this composite relation maps $h$ to an output set of propositions $P_{\mathrm{human}}(h)$, generated by first applying the ``consult content'' relation to find all relevant content, and then applying the ``interpret'' relation to find all plausible propositions from that content. This output set $P_{\mathrm{human}}(h)$ encapsulates the ground-truth or benchmark set of propositions against which the AI path is measured.
\item The {\bf LLM path} is the lower pathway in the diagram. This route captures how an LLM produces an output set of propositions given a human epistemic situation $h \in H$. The initial step is the ``prompt'' relation, which maps the question $h$ to a set of possible tokenised prompts (e.g., different phrasings of the same question). 
Subsequently, a function pairs each prompt with the fixed LLM. The stochastic ``evaluation'' relation maps that to a set of possible output token sequences. Finally, the ``interpret'' relation maps each such token sequence $o$ to a set of plausible propositions, capturing semantic ambiguity.
For example, for the query $h$ = `What is the capital of France?', this output set $P_{\mathrm{AI}}(h)$ would ideally contain the proposition `Paris is the capital of France'.
\end{itemize}

\noindent Figure~\ref{fig:2} shows the {\bf offline training pipeline} of the LLMs. This pipeline captures the uncertainty and non-determinism of the process. $s$ is the tokenisation relation. $D$ constructs a tokenised corpus, yielding a set of possible datasets (reflecting different sampling or cleaning choices). $t$ describes the training process. This is a \textbf{stochastic relation}: due to factors like random weight initialisation and data shuffling, a dataset $d$ can be mapped to a set of possible trained models $\{g_1, g_2, \dots\} \subseteq G$. A fixed $g_0$ can be viewed as the result of only one instance of this one-to-many relation.

This pipeline emphasises a crucial epistemological fact: an LLM’s `knowledge' comes only from what is represented in its training data \parencite{marcus_road_2022}. 
The fact that each stage can introduce distortions is the radical reason why these morphisms must be modelled as relations, not as simple, idealised functions.

As discussed, we define ``success'' as the \textbf{entailment} relation, requiring that the AI path's output must be contained in the human path's ground-truth set. We defined $H^{\ast} \subseteq H$ as the subset of epistemic situations where the entailment criterion holds. This set represents the domain where the AI path is \textbf{sound/ non-hallucinatory} relative to the human path. This constitutes the LLM's domain of epistemic soundness, namely, the set of queries for which the AI system functions as a reliable (not necessarily complete) information source, in the sense that any proposition it yields is guaranteed to be contained within the human ground-truth set. If $H^{\ast} = H$, the LLM achieves complete \textbf{soundness} (reliable alignment) with human epistemic processes.  In practice, $H^{\ast}$ is typically large but not total, and alignment research fundamentally aims to expand it systematically.

Our central thesis is that ``hallucinations'' are not mere implementation bugs, but are intrinsic structural failures of the LLM architecture, or more specifically, failures of the artificial path to align with the human path. To formalise this, we require a framework that can unify all the distinct failure modes (tokenisation ambiguity, dataset bias, stochastic inference, interpretive ambiguity, etc.) into a \textbf{single, coherent definition of failure}. The subcategory $\mathcal{C}$ of $\mathbf{Rel}$ is the ideal Level of Abstraction (LoA) for this task because it abstracts away the \textit{sources} of non-determinism (whether it is possible ambiguity or probabilistic stochasticity) and flattens them all into a single \textbf{relation}. Hallucination is  simply defined as any $h$ for which an inclusion relation fails.

\section{Extensions and Alternative Conditions}
\label{sec:refinements}

Our suggested framework, founded on the category $\mathbf{Rel}$ and using entailment-commutativity (defined based on $\subseteq$ instead of $=$), is not the only possible categorical analysis, it can be extended or specialised to serve as a foundation for at least two deeper, more specialised investigations:


\begin{enumerate}
    \item \textbf{A Topological (Sheaf) Perspective.} 
    We could re-characterise our relations $P_{\mathrm{human}}$ and $P_{\mathrm{AI}}$ as \textbf{sections} of a bundle over the base space $H$ (i.e., $sec: H \to \mathcal{P}(\mathrm{Pred}(W))$). The benefit of this (pre)sheaf perspective is that it allows us to ask deeper \textit{topological} questions about the nature of alignment. For example: \textit{Is ``soundness'' a stable property (an ``open set'' in $H$)}? Or \textit{is it, as empirical evidence suggests, ``brittle'' (a subset that is not open)}? \parencite{rosiak_sheaf_2022}.

    \item \textbf{A Probabilistic (Hybrid) Perspective.} 
    Since $\mathbf{Rel}$ abstracts away from probabilities, it flattens stochastic processes into ``support'' relations (i.e., focussing on the sets of entities with positive probability). We could ``un-flatten'' our relations to distinguish their origins. A more rigorous analysis would use a \textbf{hybrid framework} employing \textbf{Markov categories} (e.g., $\mathbf{Stoch}$, see \parencite{shiebler_categorical_2021}), or stochastically enriched categories such as $\mathbf{SRel}$, see \parencite{brown_categories_2009}) for the stochastic morphisms ($e, s$) while keeping $\mathbf{Rel}$ for the ambiguous ones ($p, g, r$). This would allow the outputs to be probability distributions over $O$, inducing distributions over $\mathrm{Pred}(W)$. Alignment could then be measured probabilistically (e.g., via KL divergence or other metrics between the human and AI output distributions) rather than by the set inclusion of their supports. The benefit of this approach is that it would allow us to \textit{disentangle and quantify} the sources of failure (i.e., to measure how much of a hallucination is due to stochastic sampling vs. interpretive ambiguity).

\item \textbf{Partiality and Epistemic Humility (A Feature of $\mathbf{Rel}$)}. 
The case where an LLM appropriately refuses to answer (``epistemic humility'') is not a special refinement, but is naturally handled by our $\mathbf{Rel}$ framework. A ``refusal to answer'' for a given $h$ is modelled as the AI path yielding the \textbf{empty set} of propositions:
\[
P_{\mathrm{AI}}(h) = (r \circ e \circ i_{g_0} \circ p)(h) = \varnothing.
\]

We can now analyse this case using the soundness criterion ($H^{\ast}$):

\[
P_{\mathrm{AI}}(h) \subseteq P_{\mathrm{human}}(h)
\quad \Longleftrightarrow \quad
\varnothing \subseteq P_{\mathrm{human}}(h)
\]
Since the empty set ($\varnothing$) is a subset of \textit{every} set, this condition is \textbf{always true}. This reveals a powerful feature of our model: the framework correctly classifies ``epistemic humility'' (a refusal) as sound (i.e., $h \in H^{\ast}$). A refusal to answer is the ultimate form of ``non-hallucination.

\end{enumerate}

\noindent However, for the central thesis of \textit{this} paper, which is to identify the \textit{unified structure} of architectural failure and the circumvention of the symbol grounding problem, the category $\mathbf{Rel}$ suffices as the most appropriate choice.



\section{Sources of Mismatch: Analysing Systematic Failure Points}
\label{sec:sources of mismatch}

When the routes diverge, producing $h \notin H^{\ast}$, the failure can be traced to one or more stages in the processing pipeline.

\begin{itemize}

\item \emph{Tokenisation distortion ($s$).} Information may be lost or systematically distorted during the conversion from $C$ to $C'$.

\item \emph{Dataset construction bias ($D$).} Particular information can be systematically excluded or represented disproportionately (bias).

\item \emph{Training generalisation failure ($t$).} The learning process may develop flawed generalisations.

\item \emph{Prompting ambiguity ($p$).} Poorly formulated or ambiguous questions yield unclear tokenised inputs $C'$.

\item \emph{Inference stochasticity ($e$).} Stochastic generation may produce incorrect tokens even when the LLM has implicitly `learnt' the correct factual information.

\item \emph{Interpretation failure ($r$).} Ambiguity, brevity, or context-dependence of outputs may mislead the human interpreter attempting to extract propositional content.

\end{itemize}

\noindent Each problem, alone or in combination with the others, represents a potential failure point where human and LLM routes systematically diverge, producing distinct and potentially contradictory propositions about the relevant referent $W$. We shall return to this point in Section~\ref{subsection non alignments}.

\section{An Illustrative Example}
\label{sec:example}
Consider Alice, who asks: ``What are the primary inputs for photosynthesis?''. Her query corresponds to a human epistemic situation $h \in H$.

\paragraph{Human Epistemic Path.}
The human route $(g \circ c)$ maps $h$ to the ground-truth set of propositions $P_{\mathrm{human}}(h)$. In this case, the set represents the scientifically accepted primary inputs for photosynthesis. (We use simple symbolic labels for the abstract propositions):
\[
P_{\mathrm{human}}(h) = \{ p_{\text{Light}},\ p_{\text{CO}_2},\ p_{\text{Water}} \}.
\]

\paragraph{Artificial Computational Path (Successful Case).}
The LLM ($e$) generates an output $o \in O$ corresponding to ``Light and $\mathrm{CO}_2$''.
Through interpretation ($r$), this output is mapped to the following set of propositions:
\[
P_{\mathrm{AI}}(h) = \{ p_{\text{Light}},\ p_{\text{CO}_2} \}.
\]

\noindent The inclusion $P_{\mathrm{AI}}(h) \subseteq P_{\mathrm{human}}(h)$ holds:
$\{ p_{\text{Light}}, p_{\text{CO}_2} \} \subseteq \{ p_{\text{Light}}, p_{\text{CO}_2}, p_{\text{Water}} \}$ is \textbf{true}.
Therefore, $h \in H^{\ast}$.
Our framework correctly classifies this AI response as \textbf{sound}. It does not introduce hallucinations, even though it is \textbf{incomplete} because it omits $p_{\text{Water}}$. A framework based on strict equality ($=$) would have incorrectly labelled this as a failure.

\paragraph{Counterexample: Artificial Computational Path (Failure Case).}
In contrast, suppose that the LLM ($e$) produces an output $o \in O$ corresponding to ``Light, $\mathrm{CO}_2$, and Soil.''
After interpretation ($r$), the resulting proposition set is:
\[
P_{\mathrm{AI}}(h) = \{ p_{\text{Light}},\ p_{\text{CO}_2},\ p_{\text{Soil}} \}.
\]

\noindent Here, the inclusion $P_{\mathrm{AI}}(h) \subseteq P_{\mathrm{human}}(h)$ \textbf{fails}, because $p_{\text{Soil}} \in P_{\mathrm{AI}}(h)$ but $p_{\text{Soil}} \notin P_{\mathrm{human}}(h)$.
Hence, $h \notin H^{\ast}$, indicating a failure of \textbf{soundness}.
This constitutes a clear instance of \textbf{hallucination}: the AI path has produced a proposition ($p_{\text{Soil}}$) that does not belong to the human ground-truth set \parencite{borji_categorical_2023,openai_gpt-4_2023}.

\section{Some philosophical lessons}
\label{sec:lessons}
The formal analysis presented in the previous sections can help clarify some longstanding debates about LLMs, semantics, and the nature of machine-generated content. There are also several philosophical lessons worth exploring. In the rest of this section, we discuss two that seem more relevant in this article. One is more \textit{metatheoretical} and concerns the value of a categorical analysis. The other is more \textit{epistemological} and concerns how to interpret different failures of entailment as ways in which LLMs can be wrong. We leave a third, which is \textit{semantic} and more straightforwardly philosophical, to the next section because it concerns the symbol grounding problem, the main topic of the article.

\subsection{The Value of a Categorical Analysis}
The first lesson can be introduced as an objection. One may argue that category theory introduces notation without providing any fundamentally new insight. Perhaps, we could have used a simpler flow chart or just set theory with relations, without the full formal machinery of the $\mathbf{Rel}$ category. The category approach wraps the analysis in concepts such as ``poset structures'' and ``composite relations'' which, no matter how relevant, may raise the question whether they yield new \textit{results} or just rephrase things more formally, like the entailment criterion $P_{\mathrm{AI}}(h) \subseteq P_{\mathrm{human}}(h)$. Even if leveraging categorical thinking helps consider universal properties, it does not prove anything, but rather corroborates intuitive ideas of alignment/soundness. The response, and hence the lesson learnt, is twofold.

On the one hand, the conceptual clarity and expressive power of the formalism we provided should help in contexts where anthropomorphic language, cultural and conceptual prejudices, and pre-theoretical assumptions significantly impact understanding. The debate about LLMs' capabilities depends partly on the ambiguity of loaded terms such as  `understanding', `learning', `thinking', `reasoning', `hallucinations', `knowledge', or `intelligence' \parencite{floridi_anthropomorphising_2024}.
Following the classical analytic view (Frege, Russell, Carnap), good conceptual design is not mere paraphrase but works as \textbf{a} methodological tool that helps resolve problems and terminological disputes by exposing their logical form. Indeed, later traditions---from Quine's regimentation to Lewis's \textbf{analyses} and pragmatist explication---treat clarification as directly problem-solving. In our case, separating and formalising the human epistemic paths from the LLM computational path enhances understanding, ultimately making the debate more precise and fruitful, even if participants only agree to disagree. We present our analysis as a neutral map, allowing different philosophical perspectives to be located based on the elements they emphasise and how they interpret the conditions for agreement (e.g., our entailment criterion $H^{\ast}$, or the stricter equality/commutativity).

On the other hand, by casting the problem categorically, our analysis \textbf{demonstrates its extensibility}. We have methodologically employed the category of relations $\mathbf{Rel}$ to model ambiguity and non-determinism. This framework immediately connects to further extensions needed to capture the full probabilistic nature of `$e$' and `$s$'. We saw that these include \textbf{Markov categories} like $\mathbf{Stoch}$ (which models probability measures) or categories of \textbf{stochastic relations} like $\mathbf{SRel}$ (which can also model sub-probability measures and thus partiality). An equivalent approach would be to use \textbf{probability monads} to construct the relevant Kleisli categories. In fact, other researchers have begun to use category theory to analyse LLM reasoning and limitations (\parencite{jia_category-theoretical_2024}, see also the studies we refer to in Section 8). In particular, \textcite{mahadevan_rose_2025} has addressed the challenge of semantic equivalence---how an LLM should \textbf{recognise} that `Charles Darwin wrote' and `Charles Darwin is the author of' mean the same thing---using categorical homotopy theory. This approach models semantic paraphrases as `weak equivalences' in an `LLM Markov category', aiming to ensure that semantically identical inputs yield homotopically equivalent probabilistic outputs. This resonates strongly with our own methodological move to abandon strict equality ($=$) in \textbf{favour} of a weaker, more practical relation (our entailment-commutativity, defined in terms of $\subseteq$). Our framework thus contributes to a shared, formal enterprise aimed at moving the analysis of LLMs beyond metaphor and towards mathematical \textbf{rigour}.

\subsection{Non-alignments (Entailment Failure) as ways in which LLMs can be wrong}
\label{subsection non alignments}
The second lesson concerns LLMs' hallucinations. We saw that \textit{if} our entailment criterion holds ($h \in H^{\ast}$), the entire LLM's output set is contained within the set of propositions that the human would have gotten from consulting reliable content $C$. This is a strong form of soundness (reliability): the LLM is expected to produce \textit{only} information that is supported by the human path $(g \circ c)$. In this case, the LLM has effectively arrived at a correct, well-grounded proposition, thanks to training on human content. However, sometimes, the entailment criterion will fail ($h \notin H^{\ast}$). This occurs when $P_{\mathrm{AI}}(h) \not\subseteq P_{\mathrm{human}}(h)$, which is our formal definition of a structural failure, or ``hallucination'' in a broad sense.
\[
\text{Failure}(h) \;\Longleftrightarrow\; \exists p \text{ such that } p \in P_{\mathrm{AI}}(h) \text{ and } p \notin P_{\mathrm{human}}(h).
\]

\noindent The power of the proposed $\mathcal{C}$ framework is that this definition is able to capture \textit{all} types of failure, including those identified in recent incident \textbf{analyses} (cf. \parencite{maliugina_llm_2024}). Consider these real-world examples:

\begin{itemize}
    \item \textbf{Case 1: Factual Hallucination (e.g., Air Canada).}
    A user asks the Air Canada chatbot for its bereavement refund policy ($h$).\footnote{\href{https://decisions.civilresolutionbc.ca/crt/crtd/en/item/525448/index.do}{Moffatt v. Air Canada}, 2024 BCCRT 149. See also Kyle Melnick, ``Air Canada chatbot promised a discount. Now the airline has to pay it.,'' Washington Post, February 18, 2024; Maria Yagoda, ``Airline held liable for its chatbot giving passenger bad advice – what this means for travellers,'' BBC, February 23, 2024.} The human path, $P_{\mathrm{human}}(h)$, is the set of propositions derived from the \textit{actual} company policy documents ($C$), which state the restrictive policy:
    \[ P_{\mathrm{human}}(h) = \{ p_{\text{refund-after-flight}} \} \]
 However, the LLM path hallucinates and invents a non-existent policy. This failure can manifest itself in (at least) two ways:
    \begin{itemize}
        \item \textbf{Replacement (flat-out wrong/contradiction).} The AI invents a policy \textit{instead} of the correct one: \[ P_{\mathrm{AI-a}}(h) = \{ p_{\text{refund-before-flight}} \} \]
        \item \textbf{Contamination (logically weaker).} The AI ``helpfully'' provides \textit{both} the correct policy and the invented one:
        \[ P_{\mathrm{AI-b}}(h) = \{ p_{\text{refund-after-flight}}, p_{\text{refund-before-flight}} \} \]
    \end{itemize}
The defined soundness criterion $P_{\mathrm{AI}}(h) \subseteq P_{\mathrm{human}}(h)$ correctly identifies both scenarios as failures ($h \notin H^{\ast}$). This demonstrates the robustness of our framework: it correctly classifies any introduction of non-ground-truth propositions as a failure.

\item \textbf{Case 2: Contextual Failure (e.g., Klarna).}
    A user asks the Klarna support chatbot how to write Python code ($h'$).
    The human path, $P_{\mathrm{human}}(h')$, is derived from \textit{Klarna's knowledge base} ($C$), which only contains customer service information.
    The ground-truth set for this ``off-topic'' query is arguably the empty set, or a refusal:
    \[ P_{\mathrm{human}}(h') = \{ p_{\text{I-only-handle-Klarna-services}} \} \]
    However, the LLM path ignores its context ($C$) and answers using its general memorised knowledge:
    \[ P_{\mathrm{AI}}(h') = \{ p_{\text{use-list-comprehension...}} \} \]
    This is also a failure ($h' \notin H^{\ast}$), because $P_{\mathrm{AI}}(h') \not\subseteq P_{\mathrm{human}}(h')$.
    The LLM produced a proposition that is factually correct but not supported by its designated content corpus.\footnote{See https://www.evidentlyai.com/blog/llm-hallucination-examples.}
\end{itemize}

\noindent All these cases highlight the strength of using $\mathbf{Rel}$ and \textbf{entailment ($\subseteq$)}. Our framework unifies seemingly different failure types. Both \textit{factual hallucinations} and \textit{contextual failures} or \textit{unintended \textbf{behaviors}} are revealed to be the \textit{exact same structural failure}: $P_{\mathrm{AI}}(h) \not\subseteq P_{\mathrm{human}}(h)$. This supports our central thesis that these are not isolated bugs, but intrinsic failures of the architecture to remain entailed by its grounding corpus $C$.

Finally, recent surveys (\parencite{ji_survey_2023}, \parencite{rawte_survey_2023}, \parencite{qi_survey_2024}) systematise the phenomenon of hallucination, underscoring that these \textbf{``entailment failures''} have multiple, measurable subtypes, with distinct detection and mitigation strategies. This supports our claim that hallucinations are structural, not incidental.

\section{The Symbol Grounding Problem: Circumvented, Not Solved}
\label{sec:SGP}

The third and main lesson concerns the symbol grounding problem, as classically formulated by \textcite{harnad_symbol_1990}: how systems that process symbols (including tokens) based on syntactic rules (including logical and statistical) can establish meaningful connections between those symbols and their corresponding real-world referents. Due to the recent success of LLMs, this fundamental question has regained importance, as these systems exhibit impressive linguistic performance despite being trained solely on text \parencite{pavlick_symbols_2023}. In this section, we argue that LLMs do not solve the symbol grounding problem. Instead, they circumvent the need for it through a sort of \textit{epistemic parasitism} (\cite{harnad_language_2024}): they operate exclusively on a corpus of content $C$ that human agents have already grounded through embodied experience, causal interaction with possible worlds $W$, and participation in socio-cultural practices about them.

However, before developing our argument, it is important to clarify that `grounding' in this discussion encompasses at least three interrelated but distinct aspects: (i) perceptual grounding through sensorimotor coupling with the world, (ii) causal-informational grounding through reliable information channels from the world to representation, and (iii) normative-social grounding through participation in reason-giving practices within linguistic communities. Although these aspects are mutually reinforcing in human cognition, they can be analytically distinguished, and our argument addresses all three dimensions.

In our analysis, the human path includes an implicit but crucial solution to the symbol grounding problem. The content repository $C$ consists of texts written by human authors whose linguistic symbols are already grounded. This grounding occurs through multiple and mutually reinforcing mechanisms that LLMs inherently lack; any grounding available is human-delivered.

First, human agents have a direct causal-perceptual coupling with the world. We represent this by the \textit{x} arrow in Figure~\ref{fig:1}. When a human writes `snow is white', that statement is typically linked to sensorimotor experience: seeing snow, feeling its coldness, observing its reflective qualities under different lighting conditions, noting how white things are hard to find when they fall on the snow, etc. \parencite{barsalou_perceptual_1999, harnad_language_2024}. The author's neural states bear systematic causal connections to snow itself, not merely to some descriptions of snow. These causal chains, established through perception and action, constitute what \textcite{fodor_psychosemantics_1987} called `informational relations' and what \textcite{dretske_knowledge_1981} analysed as the flow of information from the world to the mind.

Second, human grounding involves sensorimotor engagement with affordances. Understanding `grasp' involves not merely knowing that it co-occurs with `hand' and `object' in text, but having activated motor schemas for grasping, that is, knowing what it feels like to close one's fingers around an object, to adjust grip pressure, to coordinate visual and proprioceptive feedback, or to have been too weak to grasp a heavy object \parencite{gallese_brains_2005}. This embodied dimension of meaning, extensively documented in cognitive science research \parencite{barsalou_perceptual_1999, borghi_action_2014}, provides a grounding that goes beyond distributional patterns in language.

Third, humans ground their symbols as users who participate as normative agents within linguistic communities. Here we follow Wilfrid Sellars's work, particularly \parencite{sellars_empiricism_1956}, where he introduces the concept of the `logical space of reasons', an idea later used by Robert Brandom to develop his inferentialist account \parencite{brandom_making_1994}, according to which to understand a concept is to be able to provide and request reasons, to undertake commitments and recognise entitlements, and to be held accountable by other members of the community. When Alice writes `snow is white', she makes a claim that she can be called upon to justify and for which she bears epistemic responsibility. This normative aspect of language use, the structure of accountability and commitment, is constitutive of meaning in Brandom's view, and it requires genuine agency.

Consequently, the content in $C$ is not just a collection of token sequences; it already holds meaning. Human authors have undertaken the difficult epistemic work of connecting symbols to real-world referents through perception, action, and social practice. The texts in $C$ encode this pre-established semantic content. When we consult Wikipedia to learn that `Paris is the capital of France', we are accessing information created by authors who have causal-historical connections to Paris, whether through direct experience (visiting the city), indirect testimony (reading reliable historical documents), or participation in a community of knowers who collectively uphold such connections. In whatever way the symbol grounding problem is resolved for human language, LLMs do not add to it; instead, they can only subtract from it (through hallucinations). Let us clarify why.

As discussed in Section~\ref{sec: categorical_analysis}, the LLM pathway never includes the experience relation $x: H \to W$, operating entirely within the tokenised content. This architectural fact has profound implications. The entire operation---from training on the dataset $D(C')$ through inference through evaluation $e : G \times C' \to O$---consists of statistical manipulations of symbols that are already meaningful to humans. During training, the LLM learns correlations: that `Paris' co-occurs with `France' and `capital'; that `hot' and `cold' are usually contrasted; that `squeaky clean' appears in contexts describing cleanliness. These are patterns in human language about the world, not patterns in the world itself. The LLM detects second-order regularities---patterns in how humans describe patterns---without accessing the first-order regularities that ground human descriptions.

Consider the contrast explicitly. When a human learns that Paris is the capital of France, the learning process might involve, among other things:

\begin{itemize}
    \item Seeing maps showing Paris's location and political status,
    \item Reading documents produced by other grounded agents,
    \item Visiting Paris and observing government buildings,
    \item Participating in conversations where `capital' is used normatively in political discourse.
\end{itemize}

\noindent Each pathway establishes or strengthens causal-informational connections between the concept `Paris' and the city itself. When an LLM `learns' that Paris is the capital of France, the learning process involves:

\begin{itemize}
    \item Processing millions of token sequences where `Paris,' `capital,' and `France' appear in systematic proximity,
    \item Adjusting weight matrices via gradient descent to predict such sequences more accurately,
    \item Encoding statistical correlations in high-dimensional vector spaces.
\end{itemize}

\noindent The LLM learns that humans say `Paris is the capital of France' in specific contexts, not that Paris is the capital of France. This is not a minor technical distinction, but the difference between genuine reference and sophisticated pattern matching.

This difference can be clarified through the semantic concept of \textit{disquotation}. In Tarski's semantic theory of truth \parencite{tarski_semantic_1944}, the schema ```snow is white' is true if and only if snow is white'' exemplifies the transition from mentioning a sentence to using it to make a claim about reality. This \textit{disquotation} process---moving beyond the linguistic level to assess sentences against real-world states---enables understanding of truth and reference. Humans regularly perform disquotation, using sentences to make claims about the world and evaluating those claims by comparing them (directly or indirectly) to how things are in $W$. When Alice asserts `Paris is the capital of France', she is not merely manipulating a string but making a claim about Parisian geography and French political structure, a claim that is answerable to those worldly facts. LLMs cannot perform genuine disquotation because they operate entirely within a `quoted environment'. They process tokens---strings like `Paris,' `is,' `the,' `capital,' `of,' `France'---but they do not `step outside' those tokens to access Paris or France directly. Their operations remain at the level of syntax and distributional patterns. What the LLM does is better characterised as \textit{re-quotation}: it detects patterns in quoted material (the token sequences in $C'$) and re-presents those patterns in new, statistically plausible and hopefully reliable combinations. It functions within the universe of discourse (human language about the world) but never reaches through that discourse to the world itself.

This is formally shown in our diagram by the fact that the LLM pathway never includes the experience relation $x : H \to W$. The closest it comes to semantics is the relation $r : O \to \mathrm{Pred}(W)$, but crucially, $r$ is an act of interpretation carried out by human users, not by the LLM itself. The human interpreter supplies the semantic bridge, reading the output string `Paris is the capital of France' as an English proposition about Paris and France. The LLM produces syntactic patterns; humans assign meaning.

One might object that this characterisation is too severe. After all, LLMs do exhibit systematic sensitivity to semantic relations. They can answer questions, draw inferences, and show what seems to be conceptual knowledge. \textcite{piantadosi_meaning_2022} advance a version of this objection, arguing that LLMs capture important aspects of meaning through conceptual role semantics. According to this objection, meaning arises from the relationships between internal representational states rather than from direct reference to the world. In this regard, because conceptual role is defined by how internal states relate to each other, the statistical patterns learnt by LLMs might constitute genuine semantic content. Unfortunately, this objection conflates two distinct questions: whether LLMs implement systematic relationships between representations (which they obviously do) vs. whether those relationships constitute the kind of meaning that characterises human cognition (which they do not). It is the latter point, and not the former, that is at stake. Even if we grant that LLMs approximate conceptual role semantics at a formal level, this does not establish that their internal states have the same semantic properties as human concepts, since human conceptual roles are themselves grounded in sensorimotor experience and normative practices. The patterns that LLMs learn are shadows of meaningful human language use, not meaning itself.

Our framework enables us to diagnose precisely why such apparent semantic competence does not amount to genuine grounding, even partially. The main insight is that statistical patterns in language are reflections or proxies of semantic structure, not the structure itself. Since human authors have grounded symbols, and because they use language systematically to communicate about a shared world, the resulting corpus $C$ exhibits statistical regularities that mirror the world structure. Words for causally related entities co-occur; taxonomic relationships are encoded in distributional patterns; inferential connections are reflected in linguistic usage. An LLM trained on $C$ can exploit these regularities to produce outputs that appear grounded, or, one may say, that are grounded but not by the LLM. When it generates `Paris is the capital of France', it has detected that this token sequence is highly probable given specific prompt contexts. This probability reflects the fact that many humans have written this sentence for various reasons, including its truth and usefulness in communication. The LLM's statistical processing enables it to `inherit' the structure that human grounding has imposed on the corpus. But inheritance is not possession. The LLM has access to the fruits of grounding without possessing the capacity for grounding itself. It is like Poe's raven that can repeat `Nevermore' without understanding its meaning---except that the LLM's repetition is statistically sophisticated rather than acoustically faithful.

This analysis connects to a broader debate in cognitive science about the nature of semantic representation. \textcite{mollo_vector_2025} distinguish five types of grounding---sensorimotor, communicative, epistemic, relational, and referential---arguing that LLMs might achieve the first four through their training and deployment contexts, but referential grounding (the capacity to pick out specific worldly entities) remains central to what they call the `Vector Grounding Problem'. They suggest that LLMs might achieve referential grounding through mechanisms like \textit{reinforcement learning from human feedback} (RLHF), which establishes world-involving functions, or even through pre-training alone in limited domains where mechanistic interpretability research reveals world-tracking internal states. However, our categorical framework suggests a more fundamental limitation: even if specific internal states of an LLM track worldly features, this tracking relationship is \textit{parasitic} on the grounding already present in the training corpus $C$. The LLM does not establish its own causal-informational connections to $W$; instead, it detects patterns in the content produced by agents who have established such connections. Thus, any referential relations that are obtained do so \textit{derivatively}, through the mediation of human-grounded content, rather than through the LLM's own engagement with the world.

This is why we describe the LLM's approach as \textit{circumventing} the need for grounding rather than even a partial solution. A partial grounding solution would involve the system developing its own, possibly limited, causal-informational connections to $W$, for example, through experience acquired via sensorimotor interaction within a limited domain---as seen in robotics-integrated systems---or through restricted perceptual modalities, such as multimodal AI systems processing images and text. Indeed, \parencite{carta_grounding_2023} explore one such approach, training LLMs in interactive environments with online reinforcement learning, allowing systems to receive real-time feedback from their actions. Admittedly, such systems show an improved alignment between language and action in limited domains. However, they still face the fundamental limitation that their sensorimotor `experience' consists of processing digital \textit{representations} of sensor data rather than the rich, embodied coupling with the world that characterises human perception and action. Even embodied robots, with richer sensorimotor loops than a thermostat's one-dimensional temperature coupling, still process representations rather than having the kind of experiential access that characterises human perception and action. A vision-language system processes digital image files---themselves human-created representations captured through cameras and encoded in pixels---not the continuous, dynamic perceptual experience that grounds human visual concepts. Similarly, a robot learning object manipulation processes sensor data through learnt statistical regularities, lacking the integrated sensorimotor experience and social-normative context that ground human understanding of affordances and actions \parencite{bisk_experience_2020, incao_roadmap_2025}.
Another study by \textcite{ye_provable_2025} provides strong statistical support from a distinct mathematical perspective for our claim that hallucinations are intrinsic, structural failures rather than bugs. Through experimental analysis, the authors demonstrate that the root cause of such failures lies in the model learning the empirical score function---derived from the limited dataset it has observed---rather than the ground-truth score function, which it ought to learn and that represents the general laws of reality.

\textcite{dove_symbol_2024} offers a complementary perspective on this issue, arguing that the limitations of LLMs reveal something important about human cognition itself. He contends that human semantic memory is only partially grounded in sensorimotor systems and is also dependent on language-specific learning, which he calls `symbol ungrounding'.\footnote{The capacity to reason through language beyond immediate sensorimotor grounding.} According to this view, language provides humans with access to forms of cognition that transcend immediate perceptual experience, allowing us to reason about abstract concepts, distant events, and counterfactual scenarios. LLMs, operating purely on linguistic patterns, demonstrate the richness of language as a source of semantic information. However, as Dove acknowledges, this capacity for `ungrounded cognition' in humans presupposes an embodied foundation that LLMs lack. In our framework, this corresponds to the observation that the success set $H^{\ast}$---where the LLM output aligns with human knowledge---exists only because the training corpus $C$ already encodes the products of human sensorimotor and social grounding. The LLM exploits this grounding without possessing it, borrowing the ungrounded cognitive capacities that language affords humans without having the grounded basis from which those capacities emerge.

As the reader may expect, a similar line of reasoning applies to recent studies suggesting that grounding may be an \textit{emergent property} of scale and multimodality. For example, \textcite{wu_mechanistic_2025} provide causal–mechanistic evidence on the symbol grounding problem. In controlled text and multimodal tests that explicitly separate environmental (\texttt{$\langle$ENV$\rangle$}) from linguistic (\texttt{$\langle$LAN$\rangle$}) tokens, language and vision--language models learn to align words with real--world referents: matched scene or caption cues lower the surprisal of the corresponding word beyond what raw co-occurrence predicts, yielding a positive grounding information gain. Causal interventions then identify the linkage in mid-layer ``aggregate'' attention heads that collect environmental evidence and route it to the prediction position; deliberately disabling part of these heads reliably increases surprisal and degrades performance. However, from our framework's perspective, this does not provide a solution to the symbol grounding problem, but rather a more complex and layered form of circumvention. The `environmental ground' in these studies---be it a pixel array from an image or sensor data from a robot---is still a representation within the content layer $C$ (or its tokenised form $C'$). A VLM correlating the token `Paris' with a cluster of pixels is learning a pattern between two sets of representations, not between a symbol and the experiential interactions with the world $W$. The model has no access to $W$ itself. We saw above that this is corroborated by studies showing the persistent limitations of such emergent grounding. Even emergent mechanisms are best understood as creating highly sophisticated statistical mappings, within the universe of human-generated representations ($C$), which allow the system to mimic more effectively the outputs of a genuinely grounded agent. They refine pattern-matching on pre-grounded data, but they do not bridge the fundamental gap between $C$ and $W$.

As LLMs circumvent the need for grounding by operating exclusively on pre-grounded content, the success set $H^{\ast}$ in our analysis represents precisely those queries where this circumvention strategy works: where the statistical patterns the LLM has learnt happen to match the semantic structure established by human grounding because the training corpus $C$ contains sufficiently consistent, accurate, and well-distributed examples of human-grounded language use about that domain. Outside $H^{\ast}$, the circumvention fails, more or less gently and gradually, not because the LLM's grounding is weak, but because it has no grounding at all, only borrowed patterns that sometimes fail by not borrowing well enough.

The question of whether sufficiently sophisticated sensorimotor coupling could, in principle, constitute genuine grounding at some point in the future remains philosophically contentious. However, current LLMs, multimodal and robotic systems, and other systems similarly designed fall far short of this threshold and operate primarily through pattern recognition over representations rather than through the kind of direct world-engagement required by symbol grounding. Indeed, their success is the source of their limit.

Having argued that LLMs circumvent rather than solve the grounding problem, we can now examine three critical implications of this analysis: the inevitability of hallucinations, the limitations of multimodal architectures, and the inapplicability of use-theoretic semantics.

First, we have already seen that hallucinations occur when the LLM's statistical pattern-completion process generates outputs that, when interpreted by human users (via $r: O \to \mathrm{Pred}(W)$), produce propositions that are untrue or meaningless. In our framework, hallucinations represent failures of entailment. The LLM outputs a string that resembles human language, but it does not correspond to any proposition a well-informed human would assert based on reliable content. The grounding analysis further clarifies why such entailment failures are inevitable rather than merely accidental: they are systematic consequences of the LLM's circumvention strategy. Because the LLM lacks grounding, it cannot distinguish between statistically plausible patterns and genuinely true claims. It might generate `The Eiffel Tower is in Berlin' if its training data happen to contain unusual statistical contexts that make this sequence probable, or if the prompt activates misleading associations in its parameter space. From the LLM's `perspective' (metaphorically speaking), both `Paris' and `Berlin' are merely token identifiers; it has no independent access to Parisian geography to verify its output.

Second, the analysis highlights the limitations of multimodal LLMs. This is an extension of the argument made above concerning robots. LLMs handle text, images, sounds, and videos increasingly well. This might seem to provide grounding through perceptual modality. However, as \textcite{jones_multimodal_2024} have shown, multimodal LLMs do not align with human grounding: they are sensitive to some pictorial features but diverge on affordances and orientation‑dependence, and other aspects of embodied perception. This is consistent with our claim that adding modalities enriches \textit{C} without providing a genuine $x: H\to W$.

More fundamentally, the data these LLMs process are not the rich, dynamic perceptual experience that grounds human experience. A picture of a coffee cup is a representation created by human agents using some devices. The LLM learns correlations between these representations and textual descriptions, but it still lacks the direct perceptual coupling that characterises human visual experience. Thus, in our categorical framework, multimodal training data remains within $C$ (or $C'$ after tokenisation/encoding). The LLM lacks a direct mapping to $W$ itself; instead, it has access to richer types of pre-grounded human content. This may seem to narrow the grounding gap, but it does not close it. And insofar as future LLMs or embodied systems that integrate vision, language, and motor control in real-time feedback loops with physical environments continue to remain `representational' and not `experiential' systems, no grounding will be available that is not human \parencite{bisk_experience_2020, incao_roadmap_2025}. This includes retrieval-augmented generation (RAG) architectures. The study by \textcite{li_oracleagent_2025} employs an advanced RAG architecture that, from an engineering standpoint, approximates the mathematical structure required by our framework. It ensures our reliability criterion, $P_{\mathrm{AI}}(h) \subseteq P_{\mathrm{human}}(h)$, by constraining the AI route’s outputs to originate from---or be verified against---the human route (i.e., the knowledge base $C$). Its findings empirically \textbf{support} our core philosophical claim: an LLM cannot achieve grounding on its own; it must rely indirectly on a human ``pre-grounded'' content repository $C$ (as implemented in the OracleAgent system) in order to function reliably.

In fact, \textcite{weihs_higher-order_2025} investigate the internal structure of $C$: the human knowledge base is not a simple ``set'' but rather a vast and intricate \textit{hypergraph}. The authors argue that even if an AI system succeeds in learning the \textit{higher-order structure} of $C$, it is still only learning a mathematical structure. This aligns with our view that learning does not constitute \textit{causal--perceptual grounding} in the world $W$. Meanwhile, \textcite{zur_are_2025} demonstrate that LLM ``reasoning'' operates as a \textit{branching-path} process. At each generated token, the AI navigates a probabilistic space filled with \textit{alternate paths}---unstable ``beliefs.'' This finding perfectly supports our argument: an LLM does not possess a stable, grounded ``belief'' (i.e., a specific state in $W$). What it has is merely the ability to navigate through a probabilistic space of branching possibilities. This is precisely what \textit{statistical pattern matching} really means.

Third, our analysis is related to debates about use-theoretic semantics \parencite{gubelmann_pragmatic_2024, wittgenstein_philosophical_1953}. Gubelmann contends that LLMs engage in some norm-governed linguistic practices through RLHF and alignment training, and thus possess meaning in Wittgenstein's `meaning as use' sense. However, our framework helps provide a more nuanced and hence correct approach. We can \textbf{distinguish} participating in norms (which requires experience, intentionality, and accountability) from being used within normative practices by human agents. Thus, one can acknowledge that LLMs demonstrate suitable linguistic behaviour---producing contextually plausible outputs, responding to feedback signals, and approximating human usage patterns---without asserting that they participate as actors in normative practices. The distinction is vital. Gubelmann refers to Brandom's inferentialist view, yet semantic inferentialism does require that an agent can give and ask for reasons, make commitments, and be held responsible, among other things. These are not merely behavioural tendencies but normative states that presuppose conscious agency and mental states. The crucial difference is that while RLHF might superficially resemble reason-giving---with reward signals functioning as a kind of feedback---this process lacks the essential features of genuine normative practice. The system does not understand the reward signal as a reason, cannot reflect on whether to accept or reject it, and bears no genuine responsibility for its outputs. When an LLM produces text that receives negative feedback during training, it adjusts its parameters, but this is a mechanistic optimisation rather than the kind of reflective endorsement or rejection that characterises experiential interactions. Any reward or punishment of an LLM for the right or wrong linguistic games it `played' would be meaningless. The responsibility for the system's output lies entirely with the human designers, operators, and users who deploy it, not with the system itself. When an LLM outputs `Paris is the capital of France,' it has not made a commitment that it is prepared to defend or for which it can be held responsible. It seems that LLMs may be a new kind of stochastic agents,\footnote{One of the authors has argued in favour of this hypothesis, see \parencite{floridi_ai_2025}.} but not the kind of epistemic agents that can ground symbols \parencite{barandiaran_transforming_2025}. Alice, who operates the system and interprets its output, carries whatever epistemic responsibility is associated with the claim. This is why LLMs cannot really (i.e., not metaphorically) lie like humans, but only hallucinate, and why we regard semantic interpretation ($r: O \to \mathrm{Pred}(W)$) as an external human act rather than an internal feature of an LLM. LLMs are systems used by human agents in linguistic practices, not participants within those practices. They are complex systems that human agents use to access, recombine, create, and present patterns from human-grounded content. This instrumental role aligns with their significant utility, but it also rules out genuine grounding.

In conclusion, LLMs' epistemic parasitism fosters impressive operational success. By recognising statistical patterns in human discourse, LLMs can produce outputs that, when interpreted by human users, often correspond with accurate information about the world. However, this success is fundamentally limited and derivative. The LLM's apparent semantic competence is not its own achievement, but an inheritance from the human authors whose grounded language fills its training corpus. Thus, outside $H^{\ast}$, where statistical patterns diverge from semantic truth, the LLM hallucinates, generating plausible strings that do not correspond to any proposition a reliable human epistemic process would produce. These failures are not bugs to be fixed, but structural features of a system that has circumvented rather than solved the grounding problem.

Understanding LLMs through this LoA---as sophisticated pattern-completion engines operating on pre-grounded content rather than as knowledge-bearing systems with semantic understanding---enables clearer expectations, more robust evaluation methodologies, and more responsible design and deployment practices. This approach respects Ockham's razor: we should not attribute grounding or understanding to a system when its behaviour can be adequately explained through statistical pattern-matching over human-produced content. Finally, it contextualises LLMs within a broader philosophical landscape: they are powerful informational tools that mediate human access to accumulated knowledge, but they are not themselves knowers.

\section{Limitations}
\label{sec:limitations}
Like any abstract analysis, the one we offer in this article has limitations. In this section, we discuss three that may be the most significant.

\subsection{Idealisations and Simplifications}

A possible critique is that this $\mathbf{Rel}$ framework makes some crucial idealisations and simplifications, especially in terms of processes, relations, and number of agents and LLMs involved. This is not unusual,\footnote{A more general and formal, categorical analysis of ideal agents and their interactions is provided by \textcite{tohme_category_2024} and \textcite{tohme_level-agnostic_2024}.} but we acknowledge that it is a methodological choice. For simplicity, we designate this (potentially messy) set of human output $P_{\mathrm{human}}(h)$ as the definitive \textbf{ground-truth benchmark} for that specific query $h$. Our soundness criterion $P_{\mathrm{AI}}(h) \subseteq P_{\mathrm{human}}(h)$ is therefore a \textit{relative} check against this human benchmark, not a check against some \textit{absolute}, \textit{external} truth (which is modelled by $W$). This methodological choice allows the human route to serve as the standard against which the LLM's output is evaluated for soundness. Most importantly, the validity of our results and arguments is independent of this choice.

\subsection{Semantics: Why Map to Propositions Instead of \texorpdfstring{$W$}{W}}
Both human and artificial routes map into $\mathrm{Pred}(W)$ rather than directly to $W$. This methodological choice reflects the fundamental point that neither texts nor LLM outputs \emph{are} the world; they are \emph{about} the world. As already mentioned, this separation ensures that we compare like with like: sets of propositions constructed from content versus sets of propositions constructed from outputs. It also prevents type errors: $C$ and $O$ consist of strings, $\mathrm{Pred}(W)$ consists of meanings, and $W$ consists of reality. Only within $\mathrm{Pred}(W)$ can we compare formally these proposition sets (e.g., via the set-inclusion relation $\subseteq$). Philosophically, this approach follows model-theoretic semantics: truth is evaluated in $W$, but meaning resides in $\mathrm{Pred}(W)$.
\subsection{Scope}

Our analysis focusses on offline, trained LLMs, which lack experiential access to $W$. Consequently, the presented framework is a `single-shot interaction model', which means it maps a specific query to a response rather than a multi-turn, state-updating conversational process. This restriction does not undermine our central thesis, particularly regarding the symbol grounding problem, as iterating ungrounded steps does not yield grounding. Furthermore, we observed that deployed systems, including agentic AI, may use tools (e.g., search, code execution, databases) or additional modalities (vision, audio), thereby broadening the available content at any given inference time. However, in our diagram, such tool outputs are still content: they enter through $C$ (and then $C'$), are processed via $e$, and are interpreted via $r$. This does not, by itself, resolve symbol grounding; it merely expands the consulted content. We acknowledge the characterisation of humans as agents, limited explicitly at a particular background knowledge and epistemic goals, as an exemplifying tool, yet an inherently reductive schema, as are all tools of abstraction.

\section{Conclusion}
\label{sec:conclusion}
This theoretical framework offers a systematic comparison of human epistemic routes and computational artificial routes from informational situations to propositions about the world. Abstracting away from low-level implementational details helps clarify the conditions under which AI--human epistemic alignment occurs. A shared formal understanding is a prerequisite to resolving these alignment issues.

When the \textbf{entailment criterion holds} ($h \in H^{\ast}$), the AI path's output set is contained within the human path's set. Those situations constitute $H^{\ast}$, the soundness wherein the LLM functions as a sound (if not necessarily complete) informational source and a reliable interface to human content. Outside $H^{\ast}$, failures arise from systematic mismatches in tokenisation, dataset construction, training, prompting, inference, or interpretation.

This analytical perspective presents LLMs as very powerful yet fundamentally limited mechanisms: they manipulate symbols through statistical processes and can align with human knowledge when appropriately trained and deployed \parencite{floridi_ai_2023}. The framework supports both philosophical clarity and practical alignment research: success (soundness) means that AI's output set $P_{\mathrm{AI}}(h)$ is contained within the human ground-truth set ($P_{\mathrm{AI}}(h) \subseteq P_{\mathrm{human}}(h)$), ensuring it produces no propositions that lack grounding in reliable content \parencite{russell_human_2019, lin_truthfulqa_2022}.

Crucially, this analysis shows that LLMs, lacking any experiential access to the world, possess only mediated access to information \textit{about} the world through representations authored by humans. They do not solve the symbol grounding problem; instead, they circumvent it by exploiting content produced by humans who have already resolved the grounding issue. Even without a genuine understanding, LLMs can function as effective informational intermediaries. However, responsible deployment requires systematic awareness of the boundaries of $H^{\ast}$: trusting these systems only within domains where soundness (entailment) demonstrably holds, and implementing constraints or verification procedures elsewhere \parencite{christian_alignment_2020}. Hallucinations represent systematic failures of entailment, that is, instances where statistical pattern completion produces propositions that lack grounding in the factual content that humans would construct through reliable epistemic processes. They are part of the design of LLMs, an unfortunate feature, not an avoidable bug. They must be managed intelligently because they cannot be technically eradicated.

\section*{References}

\begingroup
\setlength{\parindent}{0pt}
\setlength{\parskip}{4pt}
\sloppy
\printbibliography[heading=none]
\endgroup

\appendix

\section{Appendix: Category Theory and \texorpdfstring{$\mathbf{Rel}$}{Rel}}
\label{sec:appendix}

\noindent We provide here the necessary mathematical definitions to interpret the formal framework presented in the article. We limit our attention to the general definition of a category and the specific properties of the category of relations $\mathbf{Rel}$, which serves as the ambient mathematical structure for our analysis of LLM grounding.

\vspace{1em}

\noindent \textbf{Definition of a Category.} A category $\mathbf{C}$ consists of a collection of \textbf{objects}, denoted $\mathrm{Ob}(\mathbf{C})$, and for every pair of objects $A, B$, a set of \textbf{morphisms} (or arrows) denoted $\mathrm{Hom}(A, B)$. If $f$ is a morphism from $A$ to $B$, we write $f: A \to B$. The category is equipped with a composition operation: for any morphisms $f: A \to B$ and $g: B \to C$, there exists a composite morphism $g \circ f: A \to C$. This structure must satisfy two axioms:
\begin{enumerate}
    \item \textbf{Associativity:} For all $f: A \to B$, $g: B \to C$, and $h: C \to D$, the equation $(h \circ g) \circ f = h \circ (g \circ f)$ holds.
    \item \textbf{Identity:} For every object $A$, there exists an identity morphism $\mathrm{id}_A: A \to A$ such that for every $f: A \to B$, $f \circ \mathrm{id}_A = f$ and $\mathrm{id}_B \circ f = f$.
\end{enumerate}
\noindent In standard category theory, a diagram is said to \textbf{commute} if the composition of morphisms along any two paths with the same start and end points yields the \textit{equal} morphism (e.g., $g \circ f = h$).

\vspace{1em}

\noindent \textbf{The Category $\mathbf{Rel}$.} This is the category of sets and relations, with the following components:
\begin{itemize}
    \item \textbf{Objects:} The objects of $\mathbf{Rel}$ are sets (e.g., the set of epistemic situations $H$, the set of possible worlds $W$).
    \item \textbf{Morphisms:} A morphism $R: A \to B$ is a binary relation, defined as a subset of the Cartesian product, $R \subseteq A \times B$.
    \item \textbf{Identity:} The identity morphism on a set $A$ is the diagonal relation, $\mathrm{id}_A = \{ (a, a) \mid a \in A \}$.
    \item \textbf{Composition:} Given two relations $R: A \to B$ and $S: B \to C$, their composition $S \circ R: A \to C$ is defined by:
    \[
    S \circ R = \{ (a, c) \in A \times C \mid \exists b \in B \text{ such that } (a, b) \in R \text{ and } (b, c) \in S \}.
    \]
\end{itemize}
\noindent Unlike the category of sets and functions $\mathbf{Set}$, morphisms in $\mathbf{Rel}$ can be ``one-to-many'', relating a single input to a set of outputs (which can be empty).

\vspace{1em}

\noindent \textbf{Order Enriched Structure.} A crucial feature of $\mathbf{Rel}$ used in this paper is that it is an \textit{ordered category}. For any two objects $A$ and $B$, the set of morphisms $\mathrm{Hom}(A, B)$ is partially ordered by subset inclusion ($\subseteq$).
Given two relations $R, S: A \to B$, we say that $R$ entails $S$ (denoted $R \subseteq S$) if $R$ is a subset of $S$ as sets of pairs. This order is compatible with composition: if $R \subseteq R'$ and $S \subseteq S'$, then $S \circ R \subseteq S' \circ R'$.

\vspace{1em}

\noindent \textbf{Entailment-Commutativity (Lax Commutativity).} Due to the ordered structure of $\mathbf{Rel}$, the standard notion of diagram equality is relaxed to inclusion. In this paper, we define success not by strict equality but by the existence of an inequality ($2$-morphism) between the paths. A diagram exhibits \textbf{entailment-commutativity} if the composite morphism of the lower path is contained within the composite morphism of the upper path.

\vspace{1em}

\noindent \textbf{Universal Properties, Limits, and Colimits.} A \textbf{universal property} characterises an object (or a morphism) by its relationship to all other objects of a certain type. This concept is central to defining \textbf{limits} and \textbf{colimits}.\\

\noindent A \textbf{limit} (e.g., the Cartesian product, intersection, or kernel) is a universal object that maps \textit{to} a diagram of objects. Dually, a \textbf{colimit} (e.g., the disjoint union, union, or quotient) is a universal object that maps \textit{out} of a diagram.

\vspace{1em}

    

\noindent \textbf{Limits and Colimits in $\mathbf{Rel}$.} The category $\mathbf{Rel}$ exhibits unique structural properties regarding limits:
\begin{itemize}
    \item \textbf{Categorical Limits (Biproducts):} Unlike in $\mathbf{Set}$, where the product is the Cartesian pair and the coproduct is the disjoint union, in $\mathbf{Rel}$, the categorical product and coproduct coincide. They are both given by the \textbf{disjoint union} of sets ($A \sqcup B$).
    

    \item \textbf{Order-Theoretic Operations:} Because $\mathbf{Rel}$ is enriched over posets (ordered by inclusion), each hom-set $\mathrm{Hom}(A, B)$ forms a complete lattice. In this local context:
    \begin{itemize}
        \item The \textbf{supremum} of a family of relations is their \textbf{union} ($\bigcup R_i$).
        \item The \textbf{infimum} of a family of relations is their \textbf{intersection} ($\bigcap R_i$).
    \end{itemize}

\vspace{1em}

    \noindent \textbf{Note on the Right Kan Extension:} In our framework, the Right Kan Extension $\mathrm{Ran}_p(g \circ c)$ exhibits a dual nature depending on the perspective:
    \begin{itemize}
        \item \textbf{Globally (as a Relation):} It is defined as the \textbf{supremum} (union) of all valid sound relations:
        \[ \mathrm{Ran}_p(g \circ c) = \bigcup \{ R \subseteq C' \times \mathrm{Pred}(W) \mid R \circ p \subseteq g \circ c \}. \]
        This construction yields the \textit{maximal} relation that satisfies the soundness constraint.    
        \item \textbf{Pointwise (as a Value):} For any specific input $c'$, the value of this maximal relation is computed as the \textbf{intersection} of the constraints:
        \[ \mathrm{Ran}_p(g \circ c)(c') = \bigcap_{h \in p^{-1}(c')} (g \circ c)(h). \]
        This ensures that the output is true in \textit{all} possible interpretations, thereby guaranteeing soundness.
\end{itemize}
\end{itemize}
\end{document}